\documentclass[accepted]{uai2025}

\usepackage[l2tabu, orthodox]{nag}

\usepackage{natbib}
\bibliographystyle{plainnat}

\usepackage[american]{babel}

\usepackage{amsmath}
\usepackage{amsthm}
\usepackage{amsfonts}
\usepackage{csquotes}
\usepackage{booktabs}
\usepackage{paralist}

\usepackage{bm}
\usepackage{bbm}        

\usepackage{array}
\usepackage{multirow}
\usepackage{tcolorbox}
\usepackage{mathrsfs}
\usepackage{tikz}
\usepackage{graphicx,comment,subcaption}
\usepackage{ifthen}
\usetikzlibrary{calc}
\usepackage[noend]{algpseudocode}
\usepackage{pgfplots}
\pgfplotsset{compat=1.15}
\usetikzlibrary{shapes,positioning,arrows,arrows.meta,calc,automata,matrix,fit,backgrounds}
\tikzstyle{state}+=[minimum size = 6mm, inner sep=0,outer sep=1]

\colorlet{disabled}{lightgray}
\tikzstyle{state}=[draw,rectangle,inner sep=5pt,rounded corners=2pt]
\tikzstyle{action}=[font=\small,inner sep=0pt,outer sep=3pt]
\tikzstyle{actionnode}=[circle,draw=black,fill=black,minimum size=1mm,inner sep=0,outer sep=0]
\tikzstyle{actionedge}=[draw,-]
\tikzstyle{prob}=[font=\scriptsize,inner sep=0pt,outer sep=1pt]
\tikzstyle{probedge}=[draw,->]
\tikzstyle{directedge}=[draw,->]
\tikzset{chainarrow/.tip={Stealth[length=3pt]}}
\tikzset{>=chainarrow}
\tikzset{inpolicy/.style={line width=1.3pt}}

\usepackage{xparse} 
\usepackage{mathtools}
\usepackage{environ}
\usepackage{marvosym}

\usepackage{algorithmicx,algorithm}
\usepackage[noend]{algpseudocode}

\usepackage{soul}
\usepackage{todonotes} 

\usepackage[capitalize]{cleveref}

\usepackage[switch]{lineno}

\newcommand{\B}{\mathcal{B}}

\newcommand{\C}{\mathcal{C}}
\newcommand{\calP}{\mathcal{P}}

\newcommand{\calO}{O}

\newcommand{\defas}{\coloneqq}

\newcommand{\ZZ}{\mathbb{Z}}

\newcommand{\val}{\text{val}}

\newcommand{\pot}{\text{pot}}
\newcommand{\appr}{\mathcal{A}}
\newcommand{\apprtwo}{\mathcal{A}_2}

\newcommand{\Vertices}{V} 
\newcommand{\vertex}{v} 
\newcommand{\otherver}{u} 
\newcommand{\Edges}{E}

\newcommand{\weight}{w}

\newcommand{\avg}{\text{Avg}}
\newcommand{\liminfavg}{\text{LimAvg}}

\newcommand{\run}{\omega}
\newcommand{\runs}{\Omega}
\newcommand{\policy}{\sigma}
\newcommand{\initpol}{\pi}
\newcommand{\finalpol}{\tau}
\newcommand{\policies}{\Sigma}
\newcommand{\wmax}{\bar{w}}
\newcommand{\vali}[1]{{w_{#1}}}

\theoremstyle{plain} 
\newtheorem{@theorem}{Theorem}[section]
\newenvironment{theorem}{\begin{@theorem}}{\end{@theorem}}

\newtheorem{lemma}[@theorem]{Lemma}

\newtheorem{conjecture}[@theorem]{Conjecture}

\theoremstyle{definition} 

\theoremstyle{remark} 

\sloppy

\title{Lower Bound on Howard Policy Iteration for Deterministic Markov Decision Processes}
\author[1]{\href{mailto:<ali.asadi@ista.ac.at>}{Ali Asadi}{}}
\author[1]{\href{mailto:<krishnendu.chatterjee@ist.ac.at>}{Krishnendu Chatterjee}{}}
\author[2]{\href{mailto:<jderaaij@fas.harvard.edu>}{Jakob de Raaij}{}}

\affil[1]{%
    Institute of Science and Technology Austria (ISTA)
}
\affil[2]{%
    Harvard University
}

\begin{document}
\maketitle

\begin{abstract}
    Deterministic Markov Decision Processes (DMDPs) are a mathematical framework for decision-making where the outcomes and future possible actions are deterministically determined by the current action taken. DMDPs can be viewed as a finite directed weighted graph, where in each step, the controller chooses an outgoing edge. An objective is a measurable function on runs (or infinite trajectories) of the DMDP, and the value for an objective is the maximal cumulative reward (or weight) that the controller can guarantee. We consider the classical mean-payoff (aka limit-average) objective, which is a basic and fundamental objective.
    
    Howard's policy iteration algorithm is a popular method for solving DMDPs with mean-payoff objectives. Although Howard's algorithm performs well in practice, as experimental studies suggested, 
    the best known upper bound is exponential and the current known lower bound is as follows: For the input size $I$, the algorithm requires $\widetilde{\Omega}(\sqrt{I})$ iterations, where $\widetilde{\Omega}$ hides the poly-logarithmic factors, i.e., the current lower bound on iterations is sub-linear with respect to the input size.  Our main result is an improved lower bound for this fundamental algorithm where we show that for the input size $I$, the algorithm requires $\widetilde{\Omega}(I)$ iterations. 
\end{abstract}

\section{Introduction}

\paragraph{Deterministic Markov Decision Processes.}
Deterministic Markov Decision Processes (DMDPs) \citep{Puterman94} are a mathematical framework for sequential decision-making where an agent interacts with a fully deterministic environment.  They are modeled as a graph, where in every step the controller chooses a successor vertex from the neighbors of the current vertex. This repeated process generates an infinite sequence of vertices (called a run). Policies for the controller provide the successor vertex choice at every vertex. A payoff function assigns a real value to every run. We consider a classical and well-studied function: the mean-payoff (or limit-average) payoff function~\citep{Puterman94,Filar12}. Every edge of the graph is assigned an integer weight, and the
payoff of a run is the long-run average of the weights of the run.

\paragraph{Applications.}
This formalism is particularly relevant in settings where system behaviors are fully known, such as controlled robotic environments or algorithmic planning tasks~\citep{blondel2000survey}. DMDPs also appear in formal verification and synthesis, where deterministic transitions allow for tractable analysis of safety, liveness, and temporal logic specifications~\citep{Baier08}. For example, in autonomous systems, DMDPs are a basic model for synthesizing controllers that guarantee correct behaviors~\citep{Alur15}.
Besides the practical applications, DMDPs can demonstrate fundamental computational limits, e.g., NP-hardness of optimal planning \citep{Littman97}. Furthermore, this model corresponds to classical directed weighted graphs, which have many applications such as network routing, travel planning, etc~\citep{Cormen22}.

\paragraph{Motivation.} One of the main algorithm in the area of planning and sequential decision-making is Howard's policy iteration~\citep{Howard60}. \citet{Dasdan04} compared various algorithms and showed that Howard's algorithm works well in practice as compared to other algorithms. Furthermore, lower and upper bounds for policy iteration algorithms have deep theoretical impact, e.g., lower bounds for policy iteration lead to lower bounds for pivoting approaches in linear programming~\citep{Friedmann11}. Hence better theoretical understanding of Howard's algorithm is an interesting problem. Our work aims at the theoretical understanding of this fundamental algorithm for DMDPs with mean-payoff objectives. We first recall the previous results from the literature.

\paragraph{Previous Work on Howard's Policy Iteration.}
Howard's policy iteration has been extensively studied for general MDPs (not necessarily DMDPs) with discounted-sum and mean-payoff objectives. For MDPs with mean-payoff objectives, the best-known upper bound is exponential~\citep{Puterman94}. \citet{Fearnley10}, inspired by the work of \citet{Friedmann09}, showed that Howard’s algorithm requires exponential time for a specific family of MDPs. For discounted-sum objectives, better upper bounds are known in special cases of the discount factor: \citet{Post15} established a strongly polynomial running time when the discount factor is constant or represented in unary, and \citet{hansen2013strategy} improved Post’s bounds and extended these results to 2-player settings. However, the exponential lower bound from \citet{Fearnley10} still holds for arbitrary discount factors. Related problems, such as MDPs with total-reward objectives (analogous to the stochastic shortest path), also exhibit exponential lower bounds for strategy iteration~\citep{Fearnley10}. While exponential lower bounds are established for stochastic models~\citep{Fearnley10}, game-theoretic models~\citep{Friedmann09}, and linear programming pivoting rules~\citep{Friedmann11}, identifying lower bounds specifically for Howard's policy iteration in simpler deterministic graph models has remained a compelling open question. Our contribution addresses this by presenting improved lower bounds for this simplest model, complementing the more general cases already explored in the literature.

\paragraph{Recent Results on Howard’s Policy Iteration.}
There are several recent advancements on Howard’s Policy Iteration in the literature, which highlight ongoing research in this direction. \citet{LoffSkomra24} demonstrated polynomial-time smoothed complexity for deterministic models (DMDPs and turn-based games with mean-payoff and discounted-sum objectives). However, \citet{ChristYannakakis23} presented a sub-exponential lower bound for the smoothed complexity of Howard’s algorithm for stochastic MDPs with mean-payoff objectives. Moreover, \citet{AsadiLICS24} recently improved complexity results for two-player turn-based discounted games with unary weights through a new analysis for Howard’s policy iteration.

\paragraph{Previous Work on DMDPs.}
DMDPs have been studied extensively in the literature~\citep{Arora12,Boone23,Castro20,Madani02,MadaniTZ09}. \citet{Karp78} presented an algorithm for solving DMDPs with mean-payoff objectives in $\calO(mn)$ time (where $m$ is the number of edges and $n$ is the number of vertices), while \citet{Young91} proposed an $\calO(mn + n^2 \log n)$‐time algorithm that often performs better in practice despite its slightly worse time complexity. Although Howard's policy iteration works well in practice~\citep{Dasdan04}, the known theoretical upper and lower bounds for DMDPs with mean-payoff objectives are as follows: The best-known upper bound is exponential~\citep{Puterman94}.
Moreover, a better parametric bound of $\calO(n^3 W)$ on the number of iterations can be obtained, since by \cite{Howard60}, the number of policy iteration steps is at most the number of value iteration steps, which by ~\citet{zwick1996complexity} is bounded by $\calO(n^3 W)$, where $W$ is the maximum absolute weight. 
\citet{hansen2010lower} presented a lower bound, giving DMDPs with $2n$ vertices, $m$ edges, and edge weights of $\calO(n^{{n^2}})$, on which the algorithm requires $m-n+1$ iterations to find an optimal policy.
For example, (a)~with $m=O(n)$, this result shows that on input size of $\widetilde{O}(n^3)$, the algorithm requires $\Omega(n)$ iterations, or (b)~with $m=O(n^2)$, this shows that on input size of $\widetilde{O}(n^4)$, the algorithm requires $\Omega(n^2)$ iterations. In particular, given the input description of an DMDP with $I$ bits, the results shows that the lower bound on iterations is $\widetilde{\Omega}(\sqrt{I})$. In computer science establishing and improving lower bounds are challenging, and whether this lower bound can be improved is a fundamental problem which we address in this work.

\paragraph{Our Contributions.}
The above motivates the study of Howard's policy iteration for DMDPs with mean-payoff objective. In this work, we construct a family of DMDPs with $2n$ vertices, $\calO(n^2)$ edges, and edge weights of $\calO(n^2)$, on which Howard's algorithm requires $\Omega(n^2)$ iterations to find an optimal policy. Hence, the improved lower bound is as follows. Given the input description of an DMDP with $I$ bits, the required number of iterations is $\widetilde{\Omega}(I)$. Table~\ref{table:results-sum} summarizes the results.
	
	\begin{table*}[t]
		\centering
		\caption{%
			Comparison of lower bounds for Howard's policy iteration. $|\Vertices|$, $|\Edges|$, and $W$ correspond to the number of vertices, number of edges, and maximum absolute weight, respectively.
		}
		\label{table:results-sum}
		\begin{tabular}{|c|c|c|c|c|c|}
			\hline
			  & $|\Vertices|$ & $|\Edges|$ & $W$ & Size & \# Iterations\\ 
                \hline
                \hline
                \citet{hansen2010lower} & $2n$ & $m$ & $\calO(n^{n^2})$ & $\calO(mn^2\log n)$ & $m-n+1$\\
                \hline
                Ours & $2n$ & $\calO(n^2)$ & $\calO(n^2)$ & $\calO(n^2\log n)$ & $\Omega(n^2)$\\
                \hline
		\end{tabular}
	\end{table*}

\paragraph{Significance.} As compared to the work of~\citet{hansen2010lower}, the significance of our result is twofold. First, with respect to the input size of $I$, we improve the lower bound from $\widetilde{\Omega}(\sqrt{I})$ to $\widetilde{\Omega}(I)$. Second, there is an important implication with respect to the $\calO(n^3W)$ parametric bound by \citet{zwick1996complexity}. For the family of examples considered in~\citet{hansen2010lower}, the best known upper bound is exponential as the weights are exponential. Thus, the examples of~\citet{hansen2010lower} belong to a class where the upper bound is exponential  and a sub-linear lower bound is presented. In contrast, since the weights are polynomial for our class of examples, the upper bound on the number of iterations is polynomial (namely, quadratic with respect to the input size) and we present an almost-linear lower bound.

\paragraph{Technical Novelty.}
In the examples of \citet{hansen2010lower}, Howard's policy iteration goes through $\Theta(n^2)$ "good" cycles and due to the structure of the graph only ever finds a slightly better cycle. Each cycle weight is determined by a "key" edge with largest weight. To achieve the lower bound for policy iteration, the algorithm requires the next key edge weight to differ from the previous one by a multiplicative factor of $n$ (or the cycle length, which can be up to $n$), leading to exponential edge weights. In contrast, our examples avoid this, inspired by the results of~\citet{Friedmann09,Fearnley10}, using a similar concept to their "deceleration lane". Our DMDPs only have $n$ "good" cycles that do not overlap edgewise. Thus, each cycle only needs to have a weight of $1$ more than the previous one. The "deceleration lane" technique forces the algorithm to perform $\Omega(i)$ iterations to find the $i$th cycle after having found the $i-1$th cycle. This structure ensures that (a)~Howard's algorithm finds the cycles in the right order and (b) performs $\Omega(n^2)$ iterations to find an optimal policy.
\section{Preliminaries}
\label{Section: Preliminaries}

We present standard notations and definitions related to deterministic markov decision processes.

\paragraph{Deterministic Markov Decision Processes.} 
A deterministic markov decision process~(DMDP) is a finite directed weighted graph $P = (\Vertices, \Edges, \weight)$ consisting of  
\begin{compactitem}
    \item the set of vertices $\Vertices$, of size $n$; 
    \item the set of edges $\Edges \subseteq \Vertices \times \Vertices$, of size $m$, such that for all $\vertex \in \Vertices$, the set $\Edges(\vertex) \defas \{ \otherver \mid (\vertex, \otherver) \in \Edges \}$ is non-empty; and
    \item the weight function $\weight: \Edges \to \ZZ$ that assigns a weight $\weight(\vertex, \otherver)$ for all edges $(\vertex, \otherver) \in \Edges$.
\end{compactitem}
We denote the largest absolute weight by $W \defas \max \{ |\weight(\vertex, \otherver)|  \mid  (\vertex, \otherver) \in \Edges \}$. The size of $P$ is defined as $|P| \defas n + m +  \sum_{(\vertex, \otherver) \in \Edges} \lceil \log_2 |\weight(\vertex, \otherver)| \rceil$. The vertices are indexed and have an ordering.

\paragraph{Steps and Runs.} 
Given an initial vertex $\vertex_0 \in \Vertices$, the process proceeds as follows.
In each step, the controller chooses the next vertex from the set $\Edges(\vertex)$.
A \emph{run} is an infinite sequence of vertices $\run = \langle \vertex_0, \vertex_1, \ldots \rangle$ where for every step $t \ge 0$, the vertex $\vertex_{t+1} \in \Edges(\vertex_t)$. We denote by $\runs$ 
the set of all runs, and by $\policies_\vertex$ the set of all runs $\run = \langle \vertex_0, \vertex_1, \ldots \rangle$
where $\vertex_0 = \vertex$.

\paragraph{Mean-payoff Objectives.} 
An objective is a measurable function that assigns a real number to all runs. For a run $\run = \langle \vertex_0, \vertex_1, \ldots \rangle$, the average for $t$ steps is $\avg_t(\run) \defas \frac{1}{t} \sum_{i = 0}^{t-1} \weight(\vertex_i, \vertex_{i+1})$. The $\liminf$ average is $\liminfavg(\run) \defas \liminf_{t \to \infty} \avg_t(\run)$. The objective of controller is to maximize the $\liminf$ average of the run.

\paragraph{Positional Policies.}
Policies are recipes that specify how to choose the next vertex.
A {\em positional}  policy $\policy \colon \Vertices \to \Vertices$ for the controller is a policy which chooses
a vertex~$\policy(\vertex) \in \Edges(\vertex)$ whenever the run visits vertex~$\vertex$. 
We denote by $\policies^P$ the set of all positional policies.
In general, policies can depend on past history and not only the current vertex. 
However, for mean-payoff objectives, positional policies are as powerful as general policies~\citep{Puterman94}.
Hence, in the sequel, every policy is positional.

\paragraph{Runs Given Policies in DMDPs.}
We define $P^\policy$ as the restricted DMDP where the controller follows the policy $\policy$. Note that once the controller has fixed their policy, we obtain a graph where each vertex has exactly one outgoing edge.
Given an initial vertex $\vertex$, we obtain a run $P_\vertex^{\policy} = \langle \vertex_0, \vertex_1, \ldots \rangle$ such that $\vertex_0 = \vertex$, and for any step $t \ge 0$, $\vertex_{t+1} = \policy(\vertex_t)$. The obtained run $P_\vertex^{\policy}$ is a {\em lasso-shaped}  run that consists in a finite cycle-free path $\calP \defas \langle \vertex_0, \ldots, \vertex_{p} \rangle$ followed by a simple cycle $\C \defas \langle \vertex_p, \ldots, \vertex_{p + c - 1} \rangle$
repeated forever, where $\vertex_p$ is the {\em head} of the cycle (the vertex with the least index in the cycle). The mean-payoff of the policy $\policy$ is defined as
\[
    \val^\policy(\vertex) \defas \liminfavg(P_\vertex^{\policy}) = \frac{1}{c} \sum_{i=0}^{c-1} \weight(\vertex_{p + i}, \vertex_{p + i + 1}) \,.
\]
We define the potential function as
\[
    \pot^\policy(\vertex) \defas \sum_{i=0}^{p-1} \left (\weight(\vertex_i, \vertex_{i+1}) - \val^\policy(\vertex) \right )\,.
\]
In words, the payoff $\val^{\policy}(\vertex)$ is the mean-payoff the controller obtains, in case they follow the policy $\policy$, and the potential $\pot^\policy(\vertex)$ is the relative distance from $\vertex$ to $\vertex_p$, where the weight of each edge is subtracted by the mean-payoff.

\paragraph{Value and Optimal Policies.}
The mean-payoff value for a vertex $\vertex$ is defined as
    $\val(\vertex) \defas \max_{\policy \in \policies^P} \val^{\policy}(\vertex)$.
A policy~$\policy$ for the controller is \emph{optimal} for mean-payoff objectives if, for all vertices $\vertex \in \Vertices$, we have $\val^\policy(\vertex) = \val(\vertex)$. 

\paragraph{Bellman Operator.} Given a policy $\policy$, we define the {\em appraisal} of an edge $(\vertex, \otherver)$ as a tuple
\[
    \appr^\policy(\vertex, \otherver) \defas (\val^\policy(\otherver), \weight(\vertex, \otherver) - \val^\policy(\otherver) + \pot^\policy(\otherver) ) \,. 
\]
The Bellman operator, which is an operator from $\policies^P$ to $\policies^P$, is defined as
\[\B(\policy)(\vertex) \defas \arg \max_{\otherver \in \Edges(\vertex)} \appr^\policy(\vertex, \otherver)\,.\]
The appraisals are compared lexicographically, and ties are resolved by first favoring $\otherver=\policy(\vertex)$, then vertices with the least index. For increased legibility, we will refer to the second term of the appraisal as \[\apprtwo^\policy(\vertex, \otherver) \defas \weight(\vertex, \otherver) - \val^\policy(\otherver) + \pot^\policy(\otherver)\,.\]

\paragraph{Howard's Policy Iteration.} Howard's policy iteration is a classical algorithm for computing the optimal policies in DMDPs with mean-payoff objectives. The algorithm starts with an arbitrary policy $\policy_0$. 
In each iteration, the algorithm locally improves the current policy: Starting with $\policy_k$ at iteration $k$, the algorithm computes the payoff and the potential of the policy $\policy_k$. Using these, it updates the policy using the Bellman operator defined above by setting $\policy_{k+1}=\B(\policy_k)$. The algorithm terminates if $\policy_{k+1}=\policy_k$, meaning no update to the policy was made. The correctness of Howard's algorithm is shown in \citet{derman1970finite,Puterman94}.

\section{Overview of Results}
A natural question on Howard's policy iteration is that how many iteration it takes to find an optimal policy. In the following, we state a long-standing conjecture  on the upper bound for Howard's policy iteration. 
\begin{conjecture}[{\cite[Conjecture 6.1.1]{hansen2012worst}}]
    The number of iterations performed by Howard’s algorithm, when applied to a DMDP, is at most the number of edges.
\end{conjecture}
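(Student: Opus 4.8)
Since this is a long-standing open conjecture rather than something the present paper establishes, I can only sketch the shape an attack would take and where it stalls. The only plausible route to an $m$-step bound is to exhibit a \emph{monovariant}: a quantity attached to a policy that strictly changes at every Howard step and takes at most $m$ distinct values over the sequence of policies $\policy_0, \policy_1, \ldots$ produced. As a warm-up I would first record the termination facts implicit in the preliminaries: if $\policy_{k+1} = \B(\policy_k) \neq \policy_k$, then at every improvable vertex the appraisal $\appr^{\policy_k}$ of the newly chosen edge is lexicographically strictly larger than that of the old edge, from which one derives that $(\val^{\policy_{k+1}}, \pot^{\policy_{k+1}})$ dominates $(\val^{\policy_k}, \pot^{\policy_k})$, so the $\policy_k$ are pairwise distinct and the process halts. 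But this only yields the trivial bound $|\policies^P| \le \prod_{\vertex} |\Edges(\vertex)|$, so all the content is in the monovariant.

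The natural decomposition splits the execution into \emph{value-improving} steps --- the simple cycle eventually reached from some vertex changes to one of strictly larger mean --- and \emph{potential-improving} steps, where all values are unchanged but the potentials $\pot^{\policy}$ strictly decrease at the improvable vertices. For a value-improving step one would like to charge the change to a distinct ``witness'' edge, say the largest-weight (``key'') edge of the new cycle in the spirit of the construction of \citet{hansen2010lower}, and show no key edge is used twice; the difficulty is that the number of distinct mean-payoff values is a priori only $\calO(n^2 W)$, not $\calO(m)$, so the charging must genuinely exploit the acyclicity of the lasso paths rather than merely count attainable values. The potential-improving regime looks harder still: the value class is fixed, and one must bound how many potential configurations Howard's rule can cycle through, yet no function on policies is known that both strictly decreases at each step and has range $\calO(m)$.

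I expect that last point to be \emph{the} main obstacle, and it is essentially why the conjecture has resisted proof: Howard's operator switches \emph{all} improvable vertices simultaneously, so a single step can undo and redo many individual edge choices at once, defeating any naive ``each edge is switched in at most once'' amortization. A secondary nuisance is that the tie-breaking rule (favoring $\otherver = \policy(\vertex)$, then the least index) affects the exact constant, so any argument must be robust to it or carefully exploit it. Finally, the $\Omega(n^2)$ lower bound proved in this paper shows that for $m = \Theta(n^2)$ the conjectured bound, if true, is essentially tight, leaving little slack; a full resolution would most plausibly require a new amortized or LP-duality argument strengthening the $\calO(n^3 W)$ value-iteration bound of \citet{zwick1996complexity}, and short of that one can only hope to verify the conjecture on structured subclasses of DMDPs.
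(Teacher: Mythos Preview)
You have correctly identified the situation: the statement is an \emph{open conjecture} cited from \citet{hansen2012worst}, and the paper does not prove it --- it merely records it as motivation before presenting a lower-bound construction. There is therefore no ``paper's own proof'' to compare your proposal against; your decision to instead sketch the obstructions (the lack of a monovariant with $\calO(m)$ range, the difficulty of charging potential-improving steps, the simultaneous-switch behaviour of Howard's operator) is the appropriate response, and those remarks are accurate and well-aimed.
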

\citet{hansen2010lower} constructed a family of DMDPs with $n$ vertices and $m$ edges on which Howard's algorithm performs $m - n +1$ iterations. However, the size of DMDPs is $\calO(mn^2 \log n)$ due to exponential weights. In this work, we present a family of DMDPs with $2n$ vertices and $\calO(n^2)$ edges on which Howard's algorithm performs $\Omega(n^2)$ iterations to find an optimal policy. The weights are bounded by $\calO(n^2)$. Hence, the size of our DMDPs is $\calO(n^2 \log n)$, which improves the dependency on the number of edges from linear to constant. Our main result is stated as follows.

    \begin{theorem}[Main Result]
    \label{thm:main-result}
        Let $n$ be a positive integer. There exists a DMDP with $2n$ vertices, $\frac{3n^2 + n}{2}$ edges, and size of $O(n^2\log n)$ on which Howard's algorithm performs $\frac{n^2 + 7n - 6}{2}$ iterations to find an optimal policy.
    \end{theorem}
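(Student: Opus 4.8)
The plan is to define the DMDP explicitly and then trace the entire run of Howard's policy iteration on it, starting from a deliberately bad initial policy $\policy_0$. Following the ``deceleration lane'' idea that the introduction attributes to \citet{Friedmann09} and \citet{Fearnley10}, I would build the graph on $2n$ vertices out of two interacting parts: (i)~$n$ pairwise edge-disjoint ``good'' cycle gadgets, where the $i$-th cycle has mean weight exactly one more than the $(i-1)$-th, so that only weights of magnitude $\calO(n)$ are needed there; and (ii)~a ``deceleration lane'' of $n$ vertices whose edge weights (of magnitude $\calO(n^2)$) are tuned so that, after a better cycle $i$ first becomes available, the Bellman operator only ``approves'' actually entering it once an improvement signal has crawled through the first $\Theta(i)$ lane vertices, one vertex per iteration. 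The edge set is chosen so that lane vertex $j$ points to a $\Theta(j)$-length prefix of the cycle entrances, which is what makes the number of edges equal $\frac{3n^2+n}{2}$ and, since all weights are $\calO(n^2)$, the size $\calO(n^2\log n)$.

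The heart of the argument is an induction over $n$ \emph{phases}, phase $i$ being the block of iterations in which the algorithm commits to cycle $i$ and reconfigures the lane accordingly. Within phase $i$ I would carry an exact description of the current policy $\policy_k$; since $P^{\policy_k}$ is a disjoint union of lassos, this gives closed forms for $\val^{\policy_k}$ and $\pot^{\policy_k}$ at every vertex, hence for the appraisals $\appr^{\policy_k}(\vertex, \otherver)$ at every edge, from which I would verify that $\B(\policy_k)$ redirects exactly the one intended edge — the current ``frontier'' lane vertex switching toward cycle $i$ — producing the next policy in the prescribed sequence. The quantitative claim to establish is that phase $i$ costs $\Theta(i)$ iterations: each of the $\Theta(i)$ active lane vertices must be updated in turn because, while the lane is only partially redirected, the lexicographic appraisal comparison at every not-yet-updated lane vertex still strictly prefers its old target, and only the unique frontier vertex sees a strict improvement.

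Summing the per-phase counts then yields the claimed bound: with (as one expects) a single iteration in phase $1$ and $i+3$ iterations in phase $i$ for $i \ge 2$, the total is $1 + \sum_{i=2}^{n}(i+3) = \frac{n^2+7n-6}{2}$. One then checks that the policy reached at the end of phase $n$ is a fixed point of $\B$ and is optimal — it routes every vertex into cycle $n$, which has the strictly largest mean weight and is reachable from everywhere — so Howard's algorithm does not stop earlier; and finally one verifies the bookkeeping: the edge count is $\frac{3n^2+n}{2}$, the $2n$ vertices suffice, and the weights are $\calO(n^2)$ so the size is $\calO(n^2\log n)$.

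I expect the main obstacle to be the fine-grained verification that \emph{no premature or skipped switch} ever occurs: one must rule out that $\B(\policy_k)$ jumps straight to a later cycle $i' > i$ before the lane for cycle $i$ is fully reconfigured, that it redirects two lane vertices in a single iteration, or that the ``head of cycle'' convention (least-index vertex in the cycle) together with the stated tie-breaking rule (favor the current choice, then least index) causes an off-by-one error in the potentials. Making every appraisal inequality robustly strict — which forces a careful choice of the gap between consecutive cycle weights and of the exact profile of lane weights — is where most of the effort will go.
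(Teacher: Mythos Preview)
Your high-level architecture is the paper's: $n$ ``good'' cycles (in the paper, self-loops at $t_1,\dots,t_n$ with weights $n(n+1)+i$) together with a deceleration lane on $b_1,\dots,b_n$ whose edges carry the heavy weight $\wmax=(n+1)^2$, analyzed in phases where the algorithm commits to cycle $i$, then $i+1$, and so on. The edge and size bookkeeping is also right.

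But your core invariant is wrong, and it would derail the verification. You assert that each Bellman step ``redirects exactly the one intended edge --- the current `frontier' lane vertex'' and that ``only the unique frontier vertex sees a strict improvement''. In the paper's construction (and, as far as I can see, in any construction of this shape) this is false: in a typical step $\policy_{i,j}\to\policy_{i,j+1}$, \emph{every} lane vertex $b_h$ with $h>j+1$ and \emph{every} top vertex $t_h$ with $h>j$, $h\neq i$ switches its outgoing edge simultaneously --- they all move from $b_j$ to $b_{j+1}$. The deceleration does not come from a single switch per iteration; it comes from the fact that all of these vertices are \emph{tied} among their options $b_{j+1},\dots,b_h$ (each gives the same second appraisal component $(j+1)\wmax-(j+2)\vali{i}$), and the least-index tie-breaking rule forces them to pick $b_{j+1}$, so the usable depth of the lane grows by only one. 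Your plan to make every appraisal comparison ``robustly strict'' would destroy precisely the ties that make the lane decelerate; the tie-breaking rule is load-bearing, not a nuisance to be engineered away.

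A smaller point: your per-phase accounting ($1$ iteration in phase $1$, $i+3$ in phases $i\ge 2$) happens to sum to the right total but does not match the actual run. In the paper, phase $i$ for $1\le i\le n-1$ consists of the $i+3$ policies $\initpol_i,\policy_{i,1},\dots,\policy_{i,i+1},\finalpol_i$, so already phase $1$ contributes $4$; it is the \emph{last} phase $i=n$ that is short, ending at $\policy_{n,n-1}$ after $n$ policies. This suggests your mental picture of how a phase begins and ends (and in particular how a new cycle is ``discovered'' via $\finalpol_i\to\initpol_{i+1}$, where all lane progress is wiped out in one step) is off.
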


\section{Improved Lower Bound}
In this section, we construct a family of DMDPs with $2n$ vertices of size $\calO(n^2 \log n)$ on which Howard's algorithm performs $\Omega(n^2)$ iterations. 

\subsection{DMDP Construction}
Given a positive integer $n$, we construct a DMDP $P_n = (\Vertices_n, \Edges_n, \weight_n$). We denote the set of vertices by $\Vertices_n = \{b_1, \ldots, b_n, t_1, \ldots, t_n  \}$. The ordering of vertices is $(t_1, b_1, \ldots, b_n, t_2, \ldots, t_n)$.  We denote the set of edges by
\begin{align*}
    \Edges_n 
    &\defas  \{(b_i, b_j) \mid 1 \le j < i \le n \} 
    \\  & \,\,\cup  \{(b_i, t_j) \mid 1 \le i, j \le n\}
    \\  &\,\,\cup \{ (t_i, b_j) \mid 1 \le j \le i \le n \} 
    \\  &\,\,\cup \{ (t_i, t_j) \mid 0 \le j \le i \le n \}\,.
\end{align*}
We now define the weight function as
\[
     w_n(\vertex, \otherver) \defas \begin{cases}
        (n+1)^2  & \vertex = b_i \land \otherver = b_j \\& \text{for all } 1 \le j < i \le n\\
        (n+1)^2 & \vertex = t_i \land \otherver = b_j \\& \text{for all } 1 \le j \le i \le n\\
        0  & \vertex = b_i \land \otherver = t_j \\& \text{for all } 1 \le i, j \le n\\
        0 & \vertex = t_i \land \otherver = t_j \\& \text{for all } 1 \le j < i \le n\\
        n(n+1) + i & \vertex = t_i \land \otherver = t_i \\& \text{for all } 1 \le i \le n\\
        \end{cases}
\]
\Cref{fig:dmdp} illustrates an example of the DMDP with $n=3$.

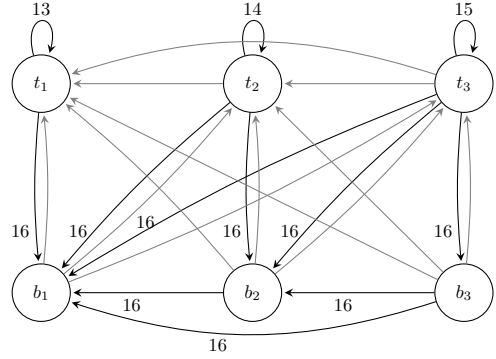
\begin{figure}[t]
\centering
\resizebox{0.8\linewidth}{!}{%
    \begin{tikzpicture}[->,>=stealth,shorten >=1pt,auto,node distance=2.5cm,scale=0.7,transform shape]

\tikzstyle{state}=[circle,draw,minimum size=1.1cm]
\def\weightpos{0.8};
\def\bend{5};
\def\bendsingle{0};
\def\zerolabel{\,};

\node[state] (t1) at (4,0) {$t_1$};
\node[state] (t2) at (8,0) {$t_2$};
\node[state] (t3) at (12,0) {$t_3$};

\node[state] (b1) at (4,-4) {$b_1$};
\node[state] (b2) at (8,-4) {$b_2$};
\node[state] (b3) at (12,-4) {$b_3$};

\path[->] (t1) edge[loop above] node {$13$} (t1);
\path[->] (t2) edge[loop above] node {$14$} (t2);
\path[->] (t3) edge[loop above] node {$15$} (t3);

\path[->] (t2) edge[pos=0.75, above, gray] node { } (t1);
\path[->] (t3) edge[pos=0.75, above, gray] node { } (t2);
\path[->] (t3) edge[bend right=18, pos=0.75, above, gray] node { } (t1);

\path[->] (t1) edge[bend right=\bend, pos=\weightpos] node[left] {$16$} (b1);
\path[->] (t2) edge[bend right=\bend, pos=\weightpos] node[left] {$16$} (b2);
\path[->] (t2) edge[bend right=\bend, pos=\weightpos] node[left] {$16$} (b1);
\path[->] (t3) edge[bend right=\bend, pos=0.75] node[left] {$16$} (b1);
\path[->] (t3) edge[bend right=\bend, pos=\weightpos] node[left] {$16$} (b2);
\path[->] (t3) edge[bend right=\bend, pos=\weightpos] node[left] {$16$} (b3);

\path[->] (b1) edge[bend right=\bend,pos=\weightpos, gray] node[right] {\zerolabel} (t1);
\path[->] (b1) edge[bend right=\bend,pos=\weightpos, gray] node[right] {\zerolabel} (t2);
\path[->] (b1) edge[bend right=\bend,pos=\weightpos, gray] node[right] {\zerolabel} (t3);
\path[->] (b2) edge[bend right=\bend,pos=\weightpos, gray] node[right] {\zerolabel} (t1);
\path[->] (b2) edge[bend right=\bend,pos=\weightpos, gray] node[right] {\zerolabel} (t2);
\path[->] (b2) edge[bend right=\bend,pos=\weightpos, gray] node[right] {\zerolabel} (t3);
\path[->] (b3) edge[bend right=\bendsingle,pos=\weightpos, gray] node[right] {\zerolabel} (t1);
\path[->] (b3) edge[bend right=\bendsingle,pos=\weightpos, gray] node[right] {\zerolabel} (t2);
\path[->] (b3) edge[bend right=\bend,pos=\weightpos, gray] node[right] {\zerolabel} (t3);

\path[->] (b2) edge[pos=0.6, below] node {$16$} (b1);
\path[->] (b3) edge[pos=0.6, below] node {$16$} (b2);
\path[->] (b3) edge[bend left = 18, pos=0.6, below] node {$16$} (b1);

\end{tikzpicture}
}
\caption{Our running example with $n=3$. Unlabeled (gray) edges have weight $0$.}
\label{fig:dmdp}
\end{figure}

\subsection{Policies}
We describe three families of policies that appear in Howard's algorithm. We first define the family of policies $\initpol_{i}$ for $1 \le i \le n$ as 
\[
    \initpol_{i}(\vertex) \defas \begin{cases}
        b_k & \vertex = t_k \qquad \text{for all } 1 \le k \le i-2\\
        t_{k} & \vertex = t_k \qquad \text{for all } k \in \{i-1, i\}\\
        t_{i} & \text{otherwise}
    \end{cases}
\]
We now define another family of policies $\policy_{i,j}$ for $1 \le i \le n$ and $1 \le j \le i+1$  as
\[
    \policy_{i,j}(\vertex) \defas \begin{cases}
        t_i & \vertex = t_i\\
        b_k & \vertex = t_k \qquad \text{for all } (1 \le k \le j \land k \neq i)\\
        & \phantom{\vertex = t_k} \qquad \text{or }  (k\le i-2 \land j=1)\\
        t_i & \vertex = b_1\\
        b_{k-1} & \vertex = b_k \qquad \text{for all } 2 \le k \leq j\\
        b_j & \text{otherwise}
    \end{cases}
\]
Finally, we define the last family of policies $\finalpol_{i}$ for $1 \le i \le n$ as
\[
    \finalpol_{i}(\vertex) \defas \begin{cases}
        t_k & \vertex = t_k \qquad \text{for all } k \in \{i, i+1\}\\
        b_k & \vertex = t_k \qquad \text{for all } 1 \le k < i\\
        t_i & \vertex = b_1\\
        b_{k-1} & \vertex = b_k \qquad \text{for all } 2 \le k \leq i+2\\
        b_{i+2} & \text{otherwise}
    \end{cases}
\]
For more intuition, the policies $\initpol_{2}$, $\policy_{2,1}$, $\policy_{2,3}$, and $\finalpol_{2}$ over our running example are illustrated in \Cref{fig:policy-families}. An illustration of all policies for the running example can be found in \Cref{sec:policy-seq}. Moreover, the outline of policies that appear in the general DMDP $P_n$ is illustrated in \Cref{sec:general-pol}.

\begin{figure*}[t]
    \centering
    \begin{subfigure}{0.4\textwidth}
        \centering
        \resizebox{0.8\linewidth}{!}{%
            \begin{tikzpicture}[->,>=stealth,shorten >=1pt,auto,node distance=2.5cm,scale=0.7,transform shape]

\tikzstyle{state}=[circle,draw,minimum size=1.1cm]
\def\weightpos{0.8};
\def\bend{5};
\def\bendsingle{0};
\def\zerolabel{\,};

\node[state] (t1) at (4,0) {$t_1$};
\node[state] (t2) at (8,0) {$t_2$};
\node[state] (t3) at (12,0) {$t_3$};

\node[state] (b1) at (4,-4) {$b_1$};
\node[state] (b2) at (8,-4) {$b_2$};
\node[state] (b3) at (12,-4) {$b_3$};

\path[->] (t1) edge[loop above, inpolicy] node {$\mathbf{13}$} (t1);
\path[->] (t2) edge[loop above, inpolicy] node {$\mathbf{14}$} (t2);
\path[->] (t3) edge[loop above] node {$15$} (t3);

\path[->] (t2) edge[pos=0.75, above, gray] node { } (t1);
\path[->] (t3) edge[pos=0.75, above, inpolicy] node {$\mathbf{0}$} (t2);
\path[->] (t3) edge[bend right=18, pos=0.75, above, gray] node { } (t1);

\path[->] (t1) edge[bend right=\bend, pos=\weightpos] node[left] {$16$} (b1);
\path[->] (t2) edge[bend right=\bend, pos=\weightpos] node[left] {$16$} (b2);
\path[->] (t2) edge[bend right=\bend, pos=\weightpos] node[left] {$16$} (b1);
\path[->] (t3) edge[bend right=\bend, pos=0.75] node[left] {$16$} (b1);
\path[->] (t3) edge[bend right=\bend, pos=\weightpos] node[left] {$16$} (b2);
\path[->] (t3) edge[bend right=\bend, pos=\weightpos] node[left] {$16$} (b3);

\path[->] (b1) edge[bend right=\bend,pos=\weightpos, gray] node[right] {\zerolabel} (t1);
\path[->] (b1) edge[bend right=\bend,pos=0.85, inpolicy] node[right] {$\mathbf{0}$} (t2);
\path[->] (b1) edge[bend right=\bend,pos=\weightpos, gray] node[right] {\zerolabel} (t3);
\path[->] (b2) edge[bend right=\bend,pos=\weightpos, gray] node[right] {\zerolabel} (t1);
\path[->] (b2) edge[bend right=\bend,pos=0.8, inpolicy] node[right] {$\mathbf{0}$} (t2);
\path[->] (b2) edge[bend right=\bend,pos=\weightpos, gray] node[right] {\zerolabel} (t3);
\path[->] (b3) edge[bend right=\bendsingle,pos=\weightpos, gray] node[right] {\zerolabel} (t1);
\path[->] (b3) edge[bend right=\bendsingle,pos=0.9, inpolicy] node[right] {$\mathbf{0}$} (t2);
\path[->] (b3) edge[bend right=\bend,pos=\weightpos, gray] node[right] {\zerolabel} (t3);

\path[->] (b2) edge[pos=0.6, below] node {$16$} (b1);
\path[->] (b3) edge[pos=0.6, below] node {$16$} (b2);
\path[->] (b3) edge[bend left = 18, pos=0.6, below] node {$16$} (b1);

\end{tikzpicture}%
        }
        \caption{Policy $\initpol_2$}
    \end{subfigure}
    \hfill
    \begin{subfigure}{0.4\textwidth}
        \centering
        \resizebox{0.8\linewidth}{!}{%
            \begin{tikzpicture}[->,>=stealth,shorten >=1pt,auto,node distance=2.5cm,scale=0.7,transform shape]

\tikzstyle{state}=[circle,draw,minimum size=1.1cm]
\def\weightpos{0.8};
\def\bend{5};
\def\bendsingle{0};
\def\zerolabel{\,};

\node[state] (t1) at (4,0) {$t_1$};
\node[state] (t2) at (8,0) {$t_2$};
\node[state] (t3) at (12,0) {$t_3$};

\node[state] (b1) at (4,-4) {$b_1$};
\node[state] (b2) at (8,-4) {$b_2$};
\node[state] (b3) at (12,-4) {$b_3$};

\path[->] (t1) edge[loop above] node {$13$} (t1);
\path[->] (t2) edge[loop above, inpolicy] node {$\mathbf{14}$} (t2);
\path[->] (t3) edge[loop above] node {$15$} (t3);

\path[->] (t2) edge[pos=0.75, above, gray] node { } (t1);
\path[->] (t3) edge[pos=0.75, above, gray] node { } (t2);
\path[->] (t3) edge[bend right=18, pos=0.75, above, gray] node { } (t1);

\path[->] (t1) edge[bend right=\bend, pos=\weightpos, inpolicy] node[left] {$\mathbf{16}$} (b1);
\path[->] (t2) edge[bend right=\bend, pos=\weightpos] node[left] {$16$} (b2);
\path[->] (t2) edge[bend right=\bend, pos=\weightpos] node[left] {$16$} (b1);
\path[->] (t3) edge[bend right=\bend, pos=0.72, inpolicy] node[left] {$\mathbf{16}$} (b1);
\path[->] (t3) edge[bend right=\bend, pos=\weightpos] node[left] {$16$} (b2);
\path[->] (t3) edge[bend right=\bend, pos=\weightpos] node[left] {$16$} (b3);

\path[->] (b1) edge[bend right=\bend,pos=\weightpos, gray] node[right] {\zerolabel} (t1);
\path[->] (b1) edge[bend right=\bend,pos=\weightpos, inpolicy] node[right] {$\mathbf{0}$} (t2);
\path[->] (b1) edge[bend right=\bend,pos=\weightpos, gray] node[right] {\zerolabel} (t3);
\path[->] (b2) edge[bend right=\bend,pos=\weightpos, gray] node[right] {\zerolabel} (t1);
\path[->] (b2) edge[bend right=\bend,pos=\weightpos, gray] node[right] {\zerolabel} (t2);
\path[->] (b2) edge[bend right=\bend,pos=\weightpos, gray] node[right] {\zerolabel} (t3);
\path[->] (b3) edge[bend right=\bendsingle,pos=\weightpos, gray] node[right] {\zerolabel} (t1);
\path[->] (b3) edge[bend right=\bendsingle,pos=\weightpos, gray] node[right] {\zerolabel} (t2);
\path[->] (b3) edge[bend right=\bend,pos=\weightpos, gray] node[right] {\zerolabel} (t3);

\path[->] (b2) edge[pos=0.6, below, inpolicy] node {$\mathbf{16}$} (b1);
\path[->] (b3) edge[pos=0.6, below] node {$16$} (b2);
\path[->] (b3) edge[bend left = 18, pos=0.6, below, inpolicy] node {$\mathbf{16}$} (b1);

\end{tikzpicture}%
        }
        \caption{Policy $\policy_{2,1}$}
    \end{subfigure}
    \hfill
    \begin{subfigure}{0.4\textwidth}
        \centering
        \resizebox{0.8\linewidth}{!}{%
            \begin{tikzpicture}[->,>=stealth,shorten >=1pt,auto,node distance=2.5cm,scale=0.7,transform shape]

\tikzstyle{state}=[circle,draw,minimum size=1.1cm]
\def\weightpos{0.8};
\def\bend{5};
\def\bendsingle{0};
\def\zerolabel{\,};

\node[state] (t1) at (4,0) {$t_1$};
\node[state] (t2) at (8,0) {$t_2$};
\node[state] (t3) at (12,0) {$t_3$};

\node[state] (b1) at (4,-4) {$b_1$};
\node[state] (b2) at (8,-4) {$b_2$};
\node[state] (b3) at (12,-4) {$b_3$};

\path[->] (t1) edge[loop above] node {$13$} (t1);
\path[->] (t2) edge[loop above, inpolicy] node {$\mathbf{14}$} (t2);
\path[->] (t3) edge[loop above] node {$15$} (t3);

\path[->] (t2) edge[pos=0.75, above, gray] node { } (t1);
\path[->] (t3) edge[pos=0.75, above, gray] node { } (t2);
\path[->] (t3) edge[bend right=18, pos=0.75, above, gray] node { } (t1);

\path[->] (t1) edge[bend right=\bend, pos=\weightpos, inpolicy] node[left] {$\mathbf{16}$} (b1);
\path[->] (t2) edge[bend right=\bend, pos=\weightpos] node[left] {$16$} (b2);
\path[->] (t2) edge[bend right=\bend, pos=\weightpos] node[left] {$16$} (b1);
\path[->] (t3) edge[bend right=\bend, pos=0.75] node[left] {$16$} (b1);
\path[->] (t3) edge[bend right=\bend, pos=\weightpos] node[left] {$16$} (b2);
\path[->] (t3) edge[bend right=\bend, pos=\weightpos, inpolicy] node[left] {$\mathbf{16}$} (b3);

\path[->] (b1) edge[bend right=\bend,pos=\weightpos, gray] node[right] {\zerolabel} (t1);
\path[->] (b1) edge[bend right=\bend,pos=\weightpos, inpolicy] node[right] {$\mathbf{0}$} (t2);
\path[->] (b1) edge[bend right=\bend,pos=\weightpos, gray] node[right] {\zerolabel} (t3);
\path[->] (b2) edge[bend right=\bend,pos=\weightpos, gray] node[right] {\zerolabel} (t1);
\path[->] (b2) edge[bend right=\bend,pos=\weightpos, gray] node[right] {\zerolabel} (t2);
\path[->] (b2) edge[bend right=\bend,pos=\weightpos, gray] node[right] {\zerolabel} (t3);
\path[->] (b3) edge[bend right=\bendsingle,pos=\weightpos, gray] node[right] {\zerolabel} (t1);
\path[->] (b3) edge[bend right=\bendsingle,pos=\weightpos, gray] node[right] {\zerolabel} (t2);
\path[->] (b3) edge[bend right=\bend,pos=\weightpos, gray] node[right] {\zerolabel} (t3);

\path[->] (b2) edge[pos=0.6, below, inpolicy] node {$\mathbf{16}$} (b1);
\path[->] (b3) edge[pos=0.6, below, inpolicy] node {$\mathbf{16}$} (b2);
\path[->] (b3) edge[bend left = 18, pos=0.6, below] node {$16$} (b1);

\end{tikzpicture}%
        }
        \caption{Policy $\policy_{2,3}$}
    \end{subfigure}
    \hfill
    \begin{subfigure}{0.4\textwidth}
        \centering
        \resizebox{0.8\linewidth}{!}{%
            \begin{tikzpicture}[->,>=stealth,shorten >=1pt,auto,node distance=2.5cm,scale=0.7,transform shape]

\tikzstyle{state}=[circle,draw,minimum size=1.1cm]
\def\weightpos{0.8};
\def\bend{5};
\def\bendsingle{0};
\def\zerolabel{\,};

\node[state] (t1) at (4,0) {$t_1$};
\node[state] (t2) at (8,0) {$t_2$};
\node[state] (t3) at (12,0) {$t_3$};

\node[state] (b1) at (4,-4) {$b_1$};
\node[state] (b2) at (8,-4) {$b_2$};
\node[state] (b3) at (12,-4) {$b_3$};

\path[->] (t1) edge[loop above] node {$13$} (t1);
\path[->] (t2) edge[loop above, inpolicy] node {$\mathbf{14}$} (t2);
\path[->] (t3) edge[loop above, inpolicy] node {$\mathbf{15}$} (t3);

\path[->] (t2) edge[pos=0.75, above, gray] node { } (t1);
\path[->] (t3) edge[pos=0.75, above, gray] node { } (t2);
\path[->] (t3) edge[bend right=18, pos=0.75, above, gray] node { } (t1);

\path[->] (t1) edge[bend right=\bend, pos=\weightpos, inpolicy] node[left] {$\mathbf{16}$} (b1);
\path[->] (t2) edge[bend right=\bend, pos=\weightpos] node[left] {$16$} (b2);
\path[->] (t2) edge[bend right=\bend, pos=\weightpos] node[left] {$16$} (b1);
\path[->] (t3) edge[bend right=\bend, pos=0.75] node[left] {$16$} (b1);
\path[->] (t3) edge[bend right=\bend, pos=\weightpos] node[left] {$16$} (b2);
\path[->] (t3) edge[bend right=\bend, pos=\weightpos] node[left] {$16$} (b3);

\path[->] (b1) edge[bend right=\bend,pos=\weightpos, gray] node[right] {\zerolabel} (t1);
\path[->] (b1) edge[bend right=\bend,pos=\weightpos, inpolicy] node[right] {$\mathbf{0}$} (t2);
\path[->] (b1) edge[bend right=\bend,pos=\weightpos, gray] node[right] {\zerolabel} (t3);
\path[->] (b2) edge[bend right=\bend,pos=\weightpos, gray] node[right] {\zerolabel} (t1);
\path[->] (b2) edge[bend right=\bend,pos=\weightpos, gray] node[right] {\zerolabel} (t2);
\path[->] (b2) edge[bend right=\bend,pos=\weightpos, gray] node[right] {\zerolabel} (t3);
\path[->] (b3) edge[bend right=\bendsingle,pos=\weightpos, gray] node[right] {\zerolabel} (t1);
\path[->] (b3) edge[bend right=\bendsingle,pos=\weightpos, gray] node[right] {\zerolabel} (t2);
\path[->] (b3) edge[bend right=\bend,pos=\weightpos, gray] node[right] {\zerolabel} (t3);

\path[->] (b2) edge[pos=0.6, below, inpolicy] node {$\mathbf{16}$} (b1);
\path[->] (b3) edge[pos=0.6, below, inpolicy] node {$\mathbf{16}$} (b2);
\path[->] (b3) edge[bend left = 18, pos=0.6, below] node {$16$} (b1);

\end{tikzpicture}%
        }
        \caption{Policy $\finalpol_2$}
    \end{subfigure}
    \caption{Examples of policies. Thick lines correspond to policy choices. Unlabeled (gray) edges have weight 0.}
    \label{fig:policy-families}
\end{figure*}
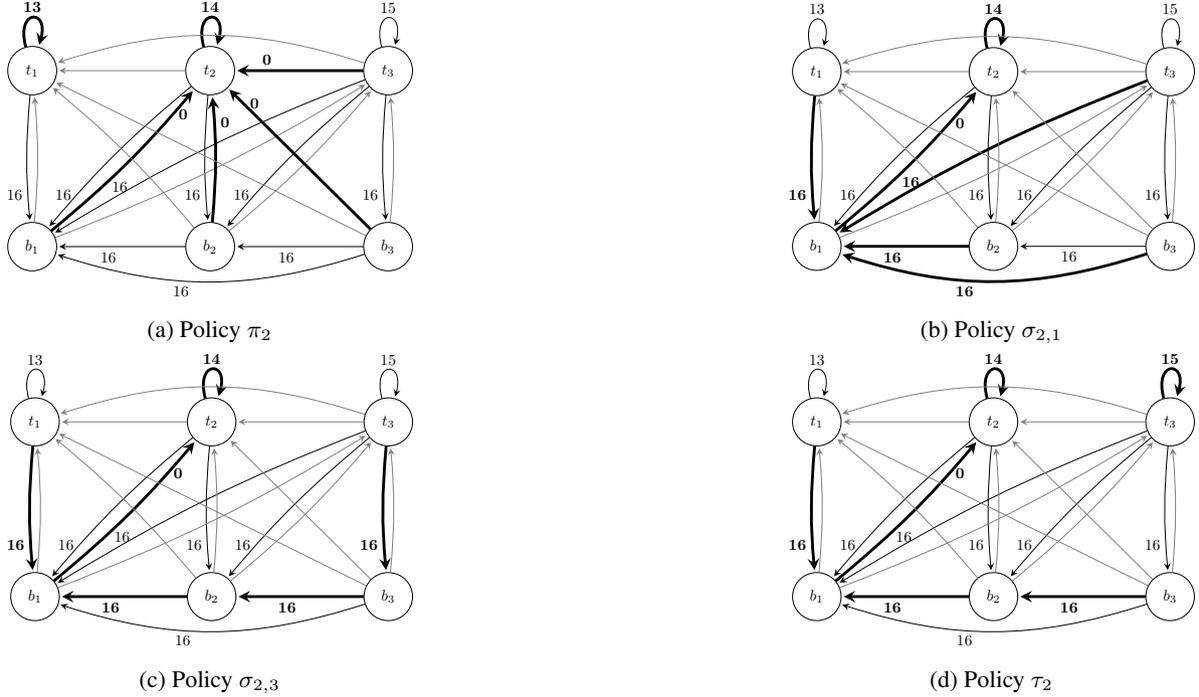

\subsection{Howard's Algorithm on $P_n$}
Given the DMDP $P_n$, we show that the sequence of policies that appear in Howard's algorithm is as follows. 
\begin{align}
    &\initpol_1 \to \policy_{1, 1} \to \policy_{1, 2} \to \finalpol_{1} \notag\\
    &\to \initpol_2 \to \policy_{2, 1} \to \policy_{2, 2} \to \policy_{2, 3} \to \finalpol_{2}\notag\\
    &\mathrel{\vdots}\label{eq:howard-seq}\\
    &\to \initpol_{n-1} \to \policy_{n-1, 1} \to \ldots \to \policy_{n-1,n} \to \finalpol_{n-1}\notag\\
    &\to \initpol_{n} \to \policy_{n, 1} \to \ldots \to \policy_{n, n-1}\notag
\end{align}

\paragraph{Intuition.} The $n$ highest mean-payoff cycles in the graph are the self-loops of $t_1$ through $t_n$. At a high level, the algorithm "finds" those cycles one after another, taking roughly $i$ iterations to find the cycle at $t_i$ after finding the cycle at $t_{i-1}$. This happens because whenever the next-best cycle is found, all "progress" the algorithm made so far in the rest of the graph is lost.
In policies $\initpol_{i}$, the lasso-shaped runs of all vertices (except $t_{i-1}$) end up in the cycle at $t_i$. Now, in the $(\policy_{i,j})_j$ chain of policies, the vertices keep adding an edge of weight $(n+1)^2$ to the path of their run by including more of the $b$-vertices (the "deceleration lane"). Since the weight of the deceleration lane edges is greater than the weight of the self-loops, the $t$-vertices only "realize" they can do better than their current run by using their self-loop when they can no longer improve their path using the deceleration lane. However, by the tie-breaking rule, the vertices only ever add one additional vertex of the deceleration lane to their path, so it takes all $i+1$ iterations from the $(\policy_{i,j})_j$ chain until the best improvement by appraisal for vertex $t_{i+1}$ is to use its self-loop, which happens in the iteration to $\finalpol_i$ (vertices $t_{i+2},...,t_n$ can still do more deceleration lane improvements so don't use their self-loop yet). In the next iteration, all vertices in the deceleration lane as well as $t_{i+2},...,t_n$ "realize" that in their current run they do not end up in the highest mean-payoff cycle, since a new, better cycle formed at $t_{i+1}$. Thus, instead of doing the next improvement in the deceleration lane, they all choose their edge directly to the now-best cycle at $t_{i+1}$. Thus, all "progress" in the deceleration lane is lost.
This continues until the cycle at $t_n$ is formed, when the algorithm does a final run through the deceleration lane before halting in the optimal policy.

\paragraph{Formal Argument.} In the following, we formally show that given a policy in the sequence, the Bellman operator returns the next policy in the sequence. The initial policy is $\initpol_1$, because $t_1$ is the vertex with lowest index in the ordering and all vertices have outgoing edges to $t_1$. 

\begin{lemma}
\label{lem:howard-sequence}
    The following assertions hold:
    \begin{compactenum}
        \item\label{item:init-to-pol} for $1 \le i \le n$, it holds that $\B(\initpol_{i}) = \policy_{i, 1}$; 
        \item\label{item:pol-to-pol} for \(1 \le i \le n\) and \(1 \le j \le \begin{cases} 
        i & 1 \le i \le n-1 \\ 
        n-2 & i = n 
    \end{cases}\), it holds that \(\mathcal{B}(\policy_{i,j}) = \policy_{i,j+1}\);
        \item\label{item:pol-to-final}  for $1 \le i \le n-1$, it holds that $\B(\policy_{i,i+1}) = \finalpol_{i}$;
         \item\label{item:final-to-init}  for $1 \le i \le n-1$, it holds that $\B(\finalpol_{i}) = \initpol_{i+1}$; and
        
        \item\label{item:pol-to-end} $\B(\policy_{n,n-1}) = \policy_{n,n-1}$. 
    \end{compactenum}
\end{lemma}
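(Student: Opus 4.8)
The plan is to verify each of the five identities by the same three‑step computation applied to the policy $\policy$ in question: (i) read off the value function $\val^\policy$ and potential function $\pot^\policy$ directly from the lasso‑shaped run of each vertex, which is immediate once $\policy$ is fixed since $P_n^{\policy}$ is a functional graph; (ii) for every vertex $\vertex$ and every $\otherver\in\Edges_n(\vertex)$ form the appraisal $\appr^\policy(\vertex,\otherver)=(\val^\policy(\otherver),\apprtwo^\policy(\vertex,\otherver))$; (iii) take the lexicographic maximum, breaking ties first toward $\policy(\vertex)$ and then toward the least index, and check that the result is the successor prescribed by the next policy in \eqref{eq:howard-seq}. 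Two observations make this tractable. First, the only simple cycles any policy in the sequence uses are the self‑loops at the $t_k$, with mean‑payoff $V_k\defas n(n+1)+k$; consequently in every policy $\initpol_i$ and $\policy_{i,j}$ all vertices have value exactly $V_i$, while in $\finalpol_i$ all vertices have value $V_i$ except $t_{i+1}$, which has value $V_{i+1}$. Second, whenever all reachable targets of $\vertex$ share the value $V_i$, the first appraisal coordinate is constant and the Bellman step reduces to maximizing $\apprtwo^\policy(\vertex,\otherver)=\bigl(\weight(\vertex,\otherver)-V_i\bigr)+\pot^\policy(\otherver)$, i.e.\ to finding a longest ``reduced‑weight'' lasso into $t_i$; the reduced weights are $n+1-i>0$ on every $b\to b$ and $t\to b$ edge, $-V_i$ on every edge into a $t$‑vertex, and $V_k-V_i=k-i$ on the self‑loop at $t_k$.

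\textbf{The initialization and lane steps.} For part~\ref{item:init-to-pol}, in $\initpol_i$ every $b$‑vertex and every $t_k$ with $k\ge i+1$ points straight at $t_i$, $t_{i-1}$ sits on its own self‑loop (value $V_{i-1}<V_i$), and the $t_k$ with $k\le i-2$ point at $b_k$; I would check that $t_{i-1}$ strictly raises its value by switching to a $b$‑vertex and, being tied among all of them, is sent to $b_1$ by the least‑index rule, that each $b_k$ ($k\ge2$) and each $t_k$ ($k\ge i+1$) is likewise tied among lower‑index $b$‑vertices and moves to $b_1$, and that everybody else stays (in particular $t_i$, whose self‑loop has $\apprtwo^{\initpol_i}(t_i,t_i)=0$, strictly above the value $(n+1)^2-2V_i<0$ of every competing $b$‑edge) — this is exactly $\policy_{i,1}$. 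For the ``deceleration‑lane'' chain in part~\ref{item:pol-to-pol} and the step in part~\ref{item:pol-to-final}, the key is a staircase property of $\pot^{\policy_{i,j}}$: traversing one more $b$‑rung raises the potential by $n+1-i$, so (for $j\ge 2$) the only improving move available to the driving vertex $t_{j+1}$ and to each $b_k,t_k$ beyond the current frontier is to advance one rung onto $b_{j+1}$, while every vertex already on the first $j$ rungs — and $b_{j+1}$ itself — is optimal and does not move, producing $\policy_{i,j+1}$; the step $\policy_{i,1}\to\policy_{i,2}$ must be checked separately because $\policy_{i,1}$ inherits the $t_k\to b_k$ assignment from $\finalpol_{i-1}$.

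\textbf{The self‑loop threshold and the reset.} The delicate point here — which together with the boundary bookkeeping I expect to be the main obstacle — is showing that the driving $t$‑vertex does not short‑circuit onto its self‑loop too early: a direct comparison of appraisals shows that advancing one rung beats using the self‑loop of $t_{i+1}$ by the margin $(n+1-i)-(V_{i+1}-V_i)=n-i>0$ precisely as long as a deeper rung is still reachable from $t_{i+1}$, i.e.\ for $j\le i$; only when the lane is exhausted at $j=i+1$, where $(t_{i+1},b_k)\in\Edges_n$ forces $t_{i+1}$ to remain at $b_{i+1}$, does the self‑loop win, and then only by the single unit $V_{i+1}-V_i=1$, which yields $\B(\policy_{i,i+1})=\finalpol_i$. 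For the reset in part~\ref{item:final-to-init}, $\finalpol_i$ is the first policy with two distinct lasso values: $t_{i+1}$ now carries the strictly larger value $V_{i+1}$, so the first appraisal coordinate forces every vertex that has an edge into $t_{i+1}$ — all of $b_1,\dots,b_n$ and all $t_k$ with $k\ge i+1$ — to jump to $t_{i+1}$, which wipes out all the potential accumulated in the lane, whereas $t_1,\dots,t_i$ possess no such edge (the edge set allows $(t_k,t_{i+1})$ only for $k\ge i+1$) and are checked to stay; this reconstitutes $\initpol_{i+1}$.

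\textbf{Termination.} Finally, for part~\ref{item:pol-to-end}, in $\policy_{n,n-1}$ all values equal $V_n$ and $(n+1)^2-V_n=1$, so the only candidate improvement anywhere is $t_n$ re‑pointing from its self‑loop onto $b_n$; but $\apprtwo^{\policy_{n,n-1}}(t_n,t_n)=0$ while $\apprtwo^{\policy_{n,n-1}}(t_n,b_n)=1+\pot^{\policy_{n,n-1}}(b_n)<0$, so $\policy_{n,n-1}$ is a fixed point. Throughout, the real effort is not any one inequality but carrying the piecewise potential formulas across the $\calO(n)$ vertex families and checking the tie‑break in each instance, especially in the degenerate regimes $i=1$, $i=n$, $j=1$, $j=i$, $j=i+1$ — where, for example, $\policy_{n,n-1}$, $\policy_{n,n}$ and $\policy_{n,n+1}$ coincide and several of the ``change'' sets become empty.
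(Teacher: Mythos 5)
Your proposal is correct and follows essentially the same route as the paper's proof: compute the value and potential of each policy, compare the second appraisal components (your ``reduced-weight/staircase'' bookkeeping is exactly the paper's inequalities $k\wmax<(k+1)\vali{i}$ and the margin $\wmax-\vali{i+1}=n-i$ for when the self-loop of $t_{i+1}$ wins), and invoke the tie-breaking rule, with the same special cases ($j=1$ inheriting $t_k\to b_k$, the value jump at $t_{i+1}$ forcing the reset to $\initpol_{i+1}$, and the fixed point at $\policy_{n,n-1}$). Two phrasing slips do not affect the argument but should be fixed in a write-up: in $\initpol_i$ the vertex $t_{i-1}$ has value $\vali{i-1}$, not $\vali{i}$ (you state this correctly two sentences later), and in the lane step the vertex $t_i$ must be excluded from the set of vertices that advance onto $b_{j+1}$, since its self-loop already maximizes its appraisal.
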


\begin{proof}
    We formally prove the first two items of the result. We abbreviate the proofs of the rest, where details can be filled in similarly.
    For notational simplicity, we denote by \[\vali{i} = w_n(t_i,t_i)= n(n+1)+i\] the weight of the self-looping edge of vertex $t_i$ and by \[\wmax=(n+1)^2\] the weight of all other non-zero edges. 
    The following inequalities will be essential throughout the proof:
    \begin{compactitem}
        \item it holds that 
        \begin{equation}
        \label{eq:pol-to-pol1} 
        \wmax > \vali{n} > \dots > \vali{i}>\dots>\vali{1}>0\,;
        \end{equation}
        \item
        for all $0\le k\leq n$ and $1\leq i\leq n$, it holds that 
         \begin{equation}
            \label{eq:pol-to-pol2} 
            k\wmax < (k+1)\vali{i}\,;  
         \end{equation}
        \item for all $k\leq n+1$ and $1\leq i\leq n$, it holds that 
        \begin{equation}
            \label{eq:pol-to-pol3} 
            (k-1)\wmax - k\vali{i} < k\wmax - (k+1)\vali{i}\,.
        \end{equation}
    \end{compactitem}

    \paragraph{Proof of \Cref{item:init-to-pol}.}
     We first give a quick intuitive rationale: In policy $\initpol_{i}$, all vertices that have an edge to directly move to $t_i$, the highest-value cycle of value $\vali{i}$, use it. $t_{i-1}$ loops to itself and the vertices $t_k, k\leq i-2$, that have no edge directly to $t_i$, use the edge to $b_{k}$ (which uses the edge to $t_i$). Since $t_i$ is the highest-value cycle, all vertices will pick their next edge so that they end up there, and between their options choose to maximize the weight of the path to $t_i$, minus $\vali{i}$ times the path length. All the vertices that currently use the edge directly to $t_i$, and $t_{i-1}$, can improve this quantity of the path to $t_i$ by using one of their edges to any of the $b$-vertices, thus adding an edge of weight $\wmax>\vali{i}$ to their path. By the tie breaking rule they pick the edge to $b_1$. The remaining vertices, $t_k$, for $k\leq i-2$, in this iteration cannot increase the average weight of their path to $t_i$, so do not change their used edge.

      We now formalize this intuition. First, we compute the mean-payoff for the policy $\initpol_i$. For all vertices $\vertex \in \Vertices \setminus \{ t_{i-1} \}$, the cycle of the lasso-shaped play $P^{\initpol_i}_\vertex$ is of form $\langle
    t_i \rangle$. Since $\initpol_i(t_{i-1}) = t_{i-1}$, therefore the cycle of $P^{\initpol_i}_{t_{i-1}}$ is of form $\langle t_{i-1} \rangle$. Therefore, the mean-payoff for the policy $\initpol_i$ is 
    \begin{equation}
    \label{eq:mp-initpol}
        \val^{\initpol_i}(\vertex) = \begin{cases}
            \vali{i-1} & \vertex = t_{i-1}\\
            \vali{i}  & \text{otherwise}\,.
        \end{cases}
    \end{equation}
    We now compute the potential function $\pot^{\initpol_i}$. Fo vertices $t_{i-1}$ and $t_i$, the potential is 0, because the path of their lasso-shaped play is of length 0. For $1 \le k \le i-2$, the path of the play $P^{\initpol_i}_{t_k}$ is of form $\langle t_k, b_k, t_i \rangle$. Therefore, we have
    \begin{align*}
        \pot^{\initpol_i}(t_k) 
        &= (\wmax - \vali{i})+ (0 - \vali{i} )\\
        &= \wmax - 2\vali{i} \,.
    \end{align*}
    For $1 \le k \le n$, the path of the play $P^{\initpol_i}_{b_k}$ is of form $\langle b_k, t_i \rangle$. Since the weight of the edge $(b_k, t_i)$ is $0$, the potential of the vertex $b_k$ is $\pot^{\initpol_i}(b_k) = - \vali{i}$.
    For $i < k$, the path of the play $P^{\initpol_i}_{t_k}$ is of form $\langle t_k, t_i \rangle$. Since the weight of the edge $(t_k, t_i)$ is $0$, the potential of the vertex $t_k$ is $\pot^{\initpol_i}(t_k) = - \vali{i}$.
    To consolidate the previous computations, the potential function for the policy $\initpol_i$ is
    \begin{equation}
    \label{eq:pot-initpol}    
        \pot^{\initpol_i}(\vertex) = \begin{cases}
            0& \vertex = t_k \quad \forall k \in \{i-1, i\}\\
            \wmax - 2\vali{i} & \vertex = t_k \quad \forall  k \le i-2\\
            - \vali{i}& \vertex = b_k \quad \forall k \le n\\
            - \vali{i}& \vertex = t_k \quad \forall  k>i
        \end{cases}
    \end{equation}
    We are now ready to show that \(\B(\initpol_i)(\vertex) = \policy_{i,1}(\vertex)\) for all \(\vertex \in \Vertices_n\). Recall that the appraisal for edge \((\vertex, \otherver)\) under \(\initpol_i\) is:
    \[
        \appr^{\initpol_i}(\vertex, \otherver) = \left(\val^{\initpol_i}(\otherver), \, \weight_n(\vertex, \otherver) - \val^{\initpol_i}(\otherver) + \pot^{\initpol_i}(\otherver)\right),
    \]
    compared lexicographically with tie-breaking as described in \Cref{Section: Preliminaries}. 
    By \Cref{eq:mp-initpol} we know that the first element of the appraisal is maximized by all $\otherver\in \Edges(\vertex)\backslash\{t_{i-1}\}$. Thus, the second element of the appraisal, $\apprtwo^{\initpol_i}(\vertex, \otherver)$, is deciding. We now precede by, unfortunately quite tediously but inevitability, calculating $\apprtwo^{\initpol_i}(\vertex, \otherver)$ for all $\vertex\in\Vertices$ and $\otherver\in\Edges(\vertex)\backslash\{t_{i-1}\}$.

    For all $j \le i-2$, $\vertex = t_j$, we get that 
    \[\apprtwo^{\initpol_i}(\vertex, \otherver)
    =\begin{cases} 
        \wmax - 3\vali{i} & \otherver=t_k, k\le j-1 \\
        \wmax + \vali{j} - 3\vali{i} & \otherver=t_j \\
        \wmax - 2\vali{i} & \otherver=b_k, k\le j
      \end{cases}\,.\] This is maximized by $\otherver=b_k$ for any $k\le j$. By the tie-breaking rule $t_j$ favors $\initpol_i(t_j)=b_j$ if possible, so that $\B(\initpol_{i})(t_j)=b_j=\policy_{i,1}(t_j).$

       For $\vertex = t_{i-1}$, we get that 
    \[\apprtwo^{\initpol_i}(\vertex, \otherver)
    =\begin{cases} 
              \wmax - 3\vali{i} & \otherver=t_k, k\le i-2 \\
              \wmax - 2\vali{i} & \otherver=b_k, k\le i-1 \\
      \end{cases}\,.\] This is maximized by $\otherver=b_k$ for any $k\le i-1$. Since $\initpol_i(t_{i-1})=t_{i-1}$,  the tie-breaking rule will favor based on the vertex ordering, so that $\B(\initpol_{i})(t_{i-1})=b_1=\policy_{i,1}(t_{i-1}).$

       For $\vertex = t_{i}$, we get that 
    \[\apprtwo^{\initpol_i}(\vertex, \otherver)
    =\begin{cases} 
            \wmax - 3\vali{i} & \otherver=t_k, k\le i-2 \\
            0 &\otherver=t_i \\ 
            \wmax - 2\vali{i} &\otherver=b_k, k\le i  \\
      \end{cases}\,.\] This is maximized by $\otherver=t_i$. Thus, it holds that $\B(\initpol_{i})(t_{i})=t_i=\policy_{i,1}(t_i).$

     For all $j > i$, $\vertex = t_j$, we get that 
    \[\apprtwo^{\initpol_i}(\vertex, \otherver)
    =\begin{cases} 
             \wmax - 3\vali{i}& \otherver=t_k, k\le i-2 \\
              -\vali{i}& \otherver=t_i  \\
              -2\vali{i}& \otherver=t_k, i<k<j  \\
              \vali{j}- 2\vali{i}& \otherver=t_j  \\
              \wmax - 2\vali{i}& \otherver=b_k, k\le j  \\
      \end{cases}\,.\] This is maximized by $\otherver=b_k$ for any $k\le j$. Since $\initpol_i(t_{j})=t_{i}$, the tie-breaking rule will favor based on the vertex ordering, so that $\B(\initpol_{i})(t_{j})=b_1=\policy_{i,1}(t_j).$

       For all $1\le j \le n$, $\vertex = b_j$, we get that 
      \[\apprtwo^{\initpol_i}(\vertex, \otherver)
    =\begin{cases} 
             \wmax - 3\vali{i}& \otherver=t_k, k\le i-2 \\
             -\vali{i}& \otherver=t_i \\
             -2\vali{i}& \otherver=t_k, k>i \\
             \wmax - 2\vali{i}& \otherver=b_k, k< j \\
      \end{cases}\,.\] For $j\ge 2$, this is maximized by $\otherver=b_k$ for any $k< j$. Since $\initpol_i(t_{j})=t_{i}$, the tie-breaking rule will favor based on the vertex ordering, so that $\B(\initpol_{i})(b_{j})=b_1=\policy_{i,1}(b_j).$ For $j=1$, it is maximized for $u=t_i$, so $\B(\initpol_{i})(b_1)=t_i=\policy_{i,1}(b_1).$
      Thus, we conclude that $\B(\initpol_{i}) = \policy_{i,1}$.

    \paragraph{Proof of \Cref{item:pol-to-pol}.}
    We first give a quick intuitive rationale: In $\policy_{i,j}$, all paths end up in the same cycle, the self-loop at $t_i$ with value $\vali{i}$. Thus, in the iteration to $\policy_{i,j+1}$, the other vertices choose their next edge solely to maximize the weight of their path to $t_i$, minus $\vali{i}$ times the path length. The optimal path for that is using as much of the deceleration lane $\langle b_n,...,b_1\rangle$ as possible, since its edges have weight $\wmax > \vali{i}$. $b_1,...,b_{j+1}$ already move to their predecessor, so for all vertices that can (except $t_i$), it is optimal to pick the edge to to $b_{j+1}$ (which they do, by the tie-breaking rule) to maximize the edges in the deceleration lane in their path. Thus, in $\policy_{i,j+1}$ the vertices that can, all pick the edge to $b_{j+1}$ (except $t_i$), while the other vertices cannot further improve their path and hence pick the same edge as in $\policy_{i,j}$.
    
    Formally, we consider policy $\policy_{i,j}$ for any $1\leq i \le n$ and $1 \le j \le \begin{cases} 
        i & 1 \le i \le n-1 \\ 
        n-2 & i = n 
    \end{cases}$. For all vertices $\vertex\in\Vertices$, the cycle of the lasso-shaped play $P^{\policy_{i,j}}$ is $\langle t_i \rangle$. Therefore, for all vertices $\vertex\in\Vertices$, it holds that $\val^{\policy_{i,j}}(\vertex) = \vali{i}$. Hence, $\B(\policy_{i,j})(\vertex)$ solely depends on $\apprtwo^{\policy_{i,j}}(\vertex,\otherver).$
    We  precede by calculating $\apprtwo^{\policy_{i,j}}(\vertex,\otherver).$ for all $\vertex\in\Vertices$ and $\otherver\in\Edges(\vertex)$. 
    We will first assume $j>1$, since $\policy^{i,j}$ is slightly different at $j=1$. We will treat this special case at the end.
    
    For $\vertex = t_{i}$, we get that \begin{multline*}
    \apprtwo^{\policy_{i,j}}(\vertex, \otherver)
    \\ =\begin{cases} 
            k\wmax  - (k+2)\vali{i} & \otherver=t_k, k\le j \\
            j\wmax  - (j+2)\vali{i} & \otherver=t_k, j<k<i \\
            0 &\otherver=t_i \\ 
            k\wmax  - (k+1)\vali{i}&\otherver=b_k, k\le j  \\
            (j+1)\wmax  - (j+2)\vali{i}&\otherver=b_k, j<k\le i 
      \end{cases}\,.\end{multline*} By \Cref{eq:pol-to-pol1,eq:pol-to-pol2,eq:pol-to-pol3}, we know that this is maximized by $\otherver=t_i$ so that $\B(\policy_{i,j})(t_{i})=t_i = \policy_{i,j+1}(t_i).$

      For $j<h<i$, $\vertex = t_{h}$, we get that  \begin{multline*}\apprtwo^{\policy_{i,j}}(\vertex, \otherver)
    \\ =\begin{cases} 
            k\wmax  - (k+2)\vali{i} & \otherver=t_k, k\le j \\
            j\wmax  - (j+2)\vali{i} & \otherver=t_k, j<k<h \\
            j\wmax + \vali{h} - (j+2)\vali{i} & \otherver=t_h \\
            k\wmax  - (k+1)\vali{i}&\otherver=b_k, k\le j  \\
            (j+1)\wmax  - (j+2)\vali{i}&\otherver=b_k, j<k\le h 
      \end{cases}\,.\end{multline*} By \Cref{eq:pol-to-pol1,eq:pol-to-pol2,eq:pol-to-pol3}, this is maximized by $\otherver=b_k$ for all $j<k\leq h$. Since $\policy_{i,j}(v)=b_j$, tie-breaking is done by vertex order so that $\B(\policy_{i,j})(t_{h})=b_{j+1} = \policy_{i,j+1}(t_h).$  

      For $h\leq j, h<i$, $\vertex = t_{h}$, we get that  \begin{multline*}\apprtwo^{\policy_{i,j}}(\vertex, \otherver)
    \\ =\begin{cases} 
            k\wmax  - (k+2)\vali{i} & \otherver=t_k, k< h \\
            h\wmax + \vali{h} - (h+2)\vali{i} & \otherver=t_h \\
            k\wmax  - (k+1)\vali{i}&\otherver=b_k, k\le h  \\
      \end{cases}\,.\end{multline*} By \Cref{eq:pol-to-pol1,eq:pol-to-pol2,eq:pol-to-pol3}, this is maximized by $\otherver=b_h$ so that $\B(\policy_{i,j})(t_{h})=b_{h} = \policy_{t_h}.$  

    For $i< h$, $\vertex = t_{h}$, we get that  \begin{multline*}\apprtwo^{\policy_{i,j}}(\vertex, \otherver)
    \\ =\begin{cases} 
            k\wmax  - (k+2)\vali{i} & \otherver=t_k, k\le j \\
            j\wmax  - (j+2)\vali{i} & \otherver=t_k, j<k<h, k\neq i \\
            -\vali{i} & \otherver=t_i \\
            j\wmax + \vali{h} - (j+2)\vali{i} & \otherver=t_h \\
            k\wmax  - (k+1)\vali{i}&\otherver=b_k, k\le j  \\
            (j+1)\wmax  - (j+2)\vali{i}&\otherver=b_k, j<k\le h 
      \end{cases}\,.\end{multline*} By \Cref{eq:pol-to-pol1,eq:pol-to-pol2,eq:pol-to-pol3}, this is maximized by $\otherver=b_k$ for all $j<k\leq h$. Since $\policy_{i,j}(v)=b_j$, tie-breaking is done by vertex order so that $\B(\policy_{i,j})(t_{h})=b_{j+1}=\policy_{i,j+1}(t_h).$ 

        For $1\leq h \leq n$, $\vertex = b_{h}$, we get that  \begin{multline*}\apprtwo^{\policy_{i,j}}(\vertex, \otherver)
    \\ =\begin{cases} 
            k\wmax  - (k+2)\vali{i} & \otherver=t_k, k\le j \\
            j\wmax  - (j+2)\vali{i} & \otherver=t_k, j<k, k\neq i \\
            -\vali{i} & \otherver=t_i \\
            k\wmax  - (k+1)\vali{i}&\otherver=b_k, k\le j, k<h  \\
            (j+1)\wmax  - (j+2)\vali{i}&\otherver=b_k, j<k<h 
      \end{cases}\,.\end{multline*} By \Cref{eq:pol-to-pol1,eq:pol-to-pol2,eq:pol-to-pol3}, for $j+1\ge h$, this is maximized by $\otherver=b_{h-1}$, so that $\B(\policy_{i,j})(b_{h})=b_{h-1}=\policy_{i,j+1}(b_h).$ For $j+1<h$, this is maximized by $b_{k}$ for all $j<k<h$. Since $\policy_{i,j}(b_h)=b_j$, tie-breaking is done by vertex order so that $\B(\policy_{i,j})(b_{h})=b_{j+1}=\policy_{i,j+1}(b_h).$ 

       Consider $j=1$. $\apprtwo^{\policy_{i,j}}(\vertex, \otherver)$ generally is unchanged from the case $j>1$, except for $\otherver=t_k$ for $1<k\le i-2$. In these cases, due to the one extra edge in the path of run $P^{\policy_{i,j}}_\vertex$,  $\apprtwo^{\policy_{i,j}}(\vertex, \otherver)$ is greater by $\wmax - \vali{i}$. This does not change which $\otherver$'s maximize $\apprtwo^{\policy_{i,j}}(\vertex, \otherver)$, so the results for $\B(\policy_{i,j})$ outlined above for $j>1$ hold for $j=1$ as well.
       
       We conclude that $\B(\policy_{i,j})=\policy_{i,j+1}$.

    \paragraph{Proof of \Cref{item:pol-to-final}.}
    This case is highly similar to \Cref{item:pol-to-pol}. In the iteration on $\policy_{i,i+1}$, all vertices except $t_{i+1}$ follow the same pattern as for the iteration from $\policy_{i,j}$ to $\policy_{i,j+1}$ for $j\leq i$. Vertex $t_{i+1}$, though, can no longer use an edge to a higher $b_k$ to add another edge of weight $\wmax>\vali{i}$ to its path to $t_i$. However, (differently to the vertices $t_k$ for $k < i$), the edge to itself has a value higher than the value of its current run's cycle $\langle t_i \rangle$, i.e. $\vali{i+1}>\vali{i}$, so it will use this edge to add it to its path. Thus, in the iteration from $\policy_{i,i+1}$ to $\finalpol_i$, all vertices will behave as in \Cref{item:pol-to-pol}, except for $t_{i+1}$, which picks its self-loop.
    Formally, for all $\vertex,\otherver \in \Vertices$, $\appr^{\policy_{i,j}}(\vertex, \otherver)$ for $j=i+1$ follows the same pattern as for $j\leq i$. We can check that for all vertices $\vertex \neq t_{i+1}$, also the pattern of which $\otherver$'s maximize it stays unchanged, so that $\B(\policy_{i,i+1})(\vertex) = {\policy_{i,i+2}}(\vertex) = {\finalpol_{i}}(\vertex)$ \footnote{Note that  policy $\sigma_{i,i+2}$ does not appear in the algorithm, but we use it here to illustrate this point.} for $\vertex \neq t_{i+1}$. 
    For $\vertex = t_{i+1}$, by \Cref{eq:pol-to-pol1,eq:pol-to-pol2,eq:pol-to-pol3} $\apprtwo^{\policy_{i,i+1}}(t_{i+1}, \otherver)$ (and thus $\appr^{\policy_{i,i+1}}(t_{i+1}, \otherver)$) is maximized by $\otherver=t_{i+1}$ so that $\B(\policy_{i,i+1})(t_{i+1}) = t_{i+1} = {\finalpol_{i}(t_{i+1})}.$
    
    \paragraph{Proof of \Cref{item:final-to-init}.}
    Intuitively, in $\finalpol_{i}$, the best cycle is $\langle t_{i+1} \rangle$ but currently no vertex other than $t_{i+1}$ ends up there. Thus, $t_{i+1}$ has the uniquely highest value, so all vertices with edges to $t_{i+1}$ will pick them. For vertices that do not have an edge to $t_{i+1}$, those are $t_k$ for $k\leq i+1$, this iteration step still is identical to the iteration from $\policy_{i,i}$ to $\policy_{i,i+1}$ in which they did not change the edge they use.
    Formally, for all vertices $\vertex \in \Vertices \setminus \{ t_{i+1} \}$, the cycle of the lasso-shaped play $P^{\finalpol_i}_\vertex$ is of form $\langle
    t_i \rangle$. The cycle of $P^{\finalpol_i}_{t_{i+1}}$ is of form $\langle t_{i+1} \rangle$. Hence, the mean-payoff for the policy $\finalpol_i$ is 
    \begin{equation*}
        \val^{\finalpol_i}(\vertex) = \begin{cases}
            \vali{i+1} & \vertex = t_{i+1}\\
            \vali{i}  & \text{otherwise}\,.
        \end{cases}
    \end{equation*}
    Since $\val^{\finalpol_i}(\vertex)$ is uniquely maximized by $\vertex = t_{i+1}$, $\B(\finalpol_{i})(\vertex) = t_{i+1}$ for all $\vertex$ where $t_{i+1}\in\Edges(\vertex)$, i.e., for $\vertex=b_k, 1\le k\le n$ and $\vertex=t_k, i+1\le k\le n$. 
    For the remaining vertices, $\vertex=t_k, 1\le k\le i$, their appraisal function $\appr^{\finalpol_{i}}(\vertex, \otherver)$ is still the same as $\appr^{\policy_{i,i}}(\vertex, \otherver)$. Thus, $\B(\finalpol_{i})(t_k)= \B(\policy_{i,i})(t_k)=\policy_{i,i+1}(t_k)= \initpol_{i+1}(t_k)$.
    
    \paragraph{Proof of \Cref{item:pol-to-end}.}
    Again, this case is very similar to \Cref{item:pol-to-pol}. In the general case for the iteration from $\policy_{i,j}$ to $\policy_{i,j+1}$, only vertices $t_k$ with $k\geq j+1, k\neq i$ and $b_k$ with $k\geq j+2$ change the edge they use. Since we reached the end of the deceleration lane and the highest-value self-loop is being used, there are no more vertices that match these criteria, so no vertex changes the edge it uses. 
    Formally, $\appr^{\policy_{i,j}}(\vertex, \otherver)$ for $i=j+1=n$ follows the same pattern as for all values of $i,j$ considered in \Cref{item:pol-to-pol}. For all vertices $\vertex \in\Vertices$, the pattern of which $\otherver$'s maximize it also stays unchanged, so that $\B(\policy_{n,n-1}) = {\policy_{n,n}}={\policy_{n,n-1}}$.
\end{proof}

\begin{proof}[Proof of \Cref{thm:main-result}]
    The constructed DMDP has $2n$ vertices and $\frac{3n^2 + n}{2}$ edges. The absolute value of weights is $\calO(n^3)$. Therefore, the size of the DMDP is $\calO(n^2\log n)$.
    
    \Cref{lem:howard-sequence} shows that if Howard's algorithm starts with the policy $\initpol_1$, it iterates over all policies in the sequence shown in \Cref{eq:howard-seq}. Therefore, the length of the sequence is 
    \begin{equation*}
        2n + \sum_{i=1}^{n} (i+1) -3 = \frac{n^2 + 7n - 6}{2}\,,
    \end{equation*}
    where the equality follows from the sum of arithmetic series and algebraic rearrangement of terms, which yields the result.
\end{proof}

\subsection{Experimental Evaluation}
To validate our lower bound example, we implemented both Howard's policy iteration and the example in Python. The experimental evaluation confirms the sequence of policies shown by the theoretical analysis. The full implementation is publicly available at \url{https://doi.org/10.5281/zenodo.14823415}.

\section{Extensions}\label{Section:Extensions}
We discuss several extensions of \Cref{thm:main-result}. 

\paragraph{Policy Initialization.}
The number of iterations that Howard's Policy Iteration algorithm performs depends on the choice of the initial policy $\policy_0$. Two natural choices are for each vertex to use (a)~the edge to it's lowest-index neighbor or (b)~the highest-weight outgoing edge (breaking ties by vertex indices). Option (b) is the most common in literature and was used in~\citet{Howard60}. While our lower bound proof above uses option (a) to get the starting policy, it is noteworthy that our lower bound also holds for (b). In particular, we let \[
    \policy_{0}(\vertex) \defas \arg\max_{\otherver\in\Edges(\vertex)} \weight(\vertex, \otherver) =
    \begin{cases}
        t_1 & \vertex = b_1\\
        b_1 & \text{otherwise}
    \end{cases}\,.
\]
One iteration of Howard's algorithm on  $\policy_0$ leads to $\policy_{1,2}$, from which on the algorithm proceeds as described above, iterating over the policies in the sequence shown in \Cref{eq:howard-seq} starting at $\policy_{1,2}$. Compared to using $\initpol_{1}$ as an initial policy,  Howard's algorithm with initial policy $\policy_0$ only performs one iteration less. Thus, the number of iterations for this policy initialization is still $\Omega(n^2)$.

\paragraph{Discounted-sum Objectives.} In discounted-sum objectives, every edge is assigned an integer weight and the payoff is the discounted sum of these weights. Although \Cref{thm:main-result} is stated for mean-payoff objectives, it extends to discounted-sum objectives with a discount factor sufficiently close to 1 as a function of $n$, because as the discount factor approaches 1, discounting diminishes and the sum converges to the mean-payoff value. Furthermore, by Blackwell optimality~\citep{Blackwell62}, an optimal policy optimal for the mean-payoff objectives remains optimal for all discount factors sufficiently near 1.

It is an open question whether our techniques can be extended to obtain a similar lower bound for a discounted-sum objective with a constant discount factor. In this setting, a lower bound of $\Omega(n^2)$ on the number of iterations would be tight up to a factor of $\log n$ to the upper bound due to \cite{hansen2013strategy}, which applies not only to DMDPs, but also stochastic MDPs and in the 2-player setting.

\section{Conclusion and Future Work} \label{sec:conclu}
In this work, we studied Howard's policy iteration algorithm for DMDPs with mean-payoff objectives and constructed a family of examples with $2n$ vertices and $\calO(n^2)$ edges where the algorithm requires \(\Omega(n^2)\) iterations and improved the lower bound on the number of iterations to $\widetilde{\Omega}(I)$ with respect to the input size $I$. There are several interesting directions for future work. In particular, Hansen's conjecture~\citep{hansen2012worst} on the number of iterations remains a major open problem. Furthermore, the practical performance of Howard’s policy iteration, despite its high theoretical worst-case complexity, raises relevant and interesting questions. While our focus is to establish an improved theoretical lower bound for this classical algorithm, these practical concerns highlight important directions for future research. 

\begin{acknowledgements}
This research was partially supported by the ERC CoG 863818 (ForM-SMArt) grant and Austrian Science Fund (FWF) 10.55776/COE12.
\end{acknowledgements}

\bibliography{refs}
\newpage
\onecolumn

\title{Lower Bound on Howard Policy Iteration for Deterministic Markov Decision Processes\\(Supplementary Material)}
\maketitle

\appendix

\section{Sequence of Policies}
\label{sec:policy-seq}
In this section, we illustrate the sequence of policies appearing in Howard's policy iteration on the DMDP $P_3$ and for the general $P_n$. 

\subsection{The policies for our running example}
\begin{figure*}[h]
    \centering
    \begin{subfigure}{0.49\textwidth}
        \centering
        \begin{tikzpicture}[->,>=stealth,shorten >=1pt,auto,node distance=2.5cm,scale=0.7,transform shape]

\tikzstyle{state}=[circle,draw,minimum size=1.1cm]
\def\weightpos{0.8};
\def\bend{5};
\def\bendsingle{0};
\def\zerolabel{\,};

\node[state] (t1) at (4,0) {$t_1$};
\node[state] (t2) at (8,0) {$t_2$};
\node[state] (t3) at (12,0) {$t_3$};

\node[state] (b1) at (4,-4) {$b_1$};
\node[state] (b2) at (8,-4) {$b_2$};
\node[state] (b3) at (12,-4) {$b_3$};

\path[->] (t1) edge[loop above, inpolicy] node {$\mathbf{13}$} (t1);
\path[->] (t2) edge[loop above] node {$14$} (t2);
\path[->] (t3) edge[loop above] node {$15$} (t3);

\path[->] (t2) edge[pos=0.6, above, inpolicy] node {$\mathbf{0}$} (t1);
\path[->] (t3) edge[pos=0.75, above, gray] node { } (t2);
\path[->] (t3) edge[bend right=18, pos=0.75, above, inpolicy] node {$\mathbf{0}$} (t1);

\path[->] (t1) edge[bend right=\bend, pos=\weightpos] node[left] {$16$} (b1);
\path[->] (t2) edge[bend right=\bend, pos=\weightpos] node[left] {$16$} (b2);
\path[->] (t2) edge[bend right=\bend, pos=\weightpos] node[left] {$16$} (b1);
\path[->] (t3) edge[bend right=\bend, pos=0.75] node[left] {$16$} (b1);
\path[->] (t3) edge[bend right=\bend, pos=\weightpos] node[left] {$16$} (b2);
\path[->] (t3) edge[bend right=\bend, pos=\weightpos] node[left] {$16$} (b3);

\path[->] (b1) edge[bend right=\bend,pos=\weightpos, inpolicy] node[right] {$\mathbf{0}$} (t1);
\path[->] (b1) edge[bend right=\bend,pos=\weightpos, gray] node[right] {\zerolabel} (t2);
\path[->] (b1) edge[bend right=\bend,pos=\weightpos, gray] node[right] {\zerolabel} (t3);
\path[->] (b2) edge[bend right=\bendsingle,pos=\weightpos, inpolicy] node[right] {$\mathbf{0}$} (t1);
\path[->] (b2) edge[bend right=\bend,pos=\weightpos, gray] node[right] {\zerolabel} (t2);
\path[->] (b2) edge[bend right=\bend,pos=\weightpos, gray] node[right] {\zerolabel} (t3);
\path[->] (b3) edge[bend right=\bendsingle,pos=0.87,inner sep=10pt, inpolicy] node[right] {$\mathbf{0}$} (t1);
\path[->] (b3) edge[bend right=\bendsingle,pos=\weightpos, gray] node[right] {\zerolabel} (t2);
\path[->] (b3) edge[bend right=\bend,pos=\weightpos, gray] node[right] {\zerolabel} (t3);

\path[->] (b2) edge[pos=0.6, below] node {$16$} (b1);
\path[->] (b3) edge[pos=0.6, below] node {$16$} (b2);
\path[->] (b3) edge[bend left = 18, pos=0.6, below] node {$16$} (b1);

\end{tikzpicture}
        \caption{Policy $\initpol_{1}$}
    \end{subfigure}
    \hfill
    \begin{subfigure}{0.49\textwidth}
        \centering
        \begin{tikzpicture}[->,>=stealth,shorten >=1pt,auto,node distance=2.5cm,scale=0.7,transform shape]

\tikzstyle{state}=[circle,draw,minimum size=1.1cm]
\def\weightpos{0.8};
\def\bend{5};
\def\bendsingle{0};
\def\zerolabel{\,};

\node[state] (t1) at (4,0) {$t_1$};
\node[state] (t2) at (8,0) {$t_2$};
\node[state] (t3) at (12,0) {$t_3$};

\node[state] (b1) at (4,-4) {$b_1$};
\node[state] (b2) at (8,-4) {$b_2$};
\node[state] (b3) at (12,-4) {$b_3$};

\path[->] (t1) edge[loop above, inpolicy] node {$\mathbf{13}$} (t1);
\path[->] (t2) edge[loop above] node {$14$} (t2);
\path[->] (t3) edge[loop above] node {$15$} (t3);

\path[->] (t2) edge[pos=0.75, above, gray] node { } (t1);
\path[->] (t3) edge[pos=0.75, above, gray] node { } (t2);
\path[->] (t3) edge[bend right=18, pos=0.75, above, gray] node { } (t1);

\path[->] (t1) edge[bend right=\bend, pos=\weightpos] node[left] {$16$} (b1);
\path[->] (t2) edge[bend right=\bend, pos=\weightpos] node[left] {$16$} (b2);
\path[->] (t2) edge[bend right=\bend, pos=\weightpos, inpolicy] node[left] {$\mathbf{16}$} (b1);
\path[->] (t3) edge[bend right=\bend, pos=0.75, inpolicy] node[left] {$\mathbf{16}$} (b1);
\path[->] (t3) edge[bend right=\bend, pos=\weightpos] node[left] {$16$} (b2);
\path[->] (t3) edge[bend right=\bend, pos=\weightpos] node[left] {$16$} (b3);

\path[->] (b1) edge[bend right=\bend,pos=\weightpos, inpolicy] node[right] {$\mathbf{0}$} (t1);
\path[->] (b1) edge[bend right=\bend,pos=\weightpos, gray] node[right] {\zerolabel} (t2);
\path[->] (b1) edge[bend right=\bend,pos=\weightpos, gray] node[right] {\zerolabel} (t3);
\path[->] (b2) edge[bend right=\bend,pos=\weightpos, gray] node[right] {\zerolabel} (t1);
\path[->] (b2) edge[bend right=\bend,pos=\weightpos, gray] node[right] {\zerolabel} (t2);
\path[->] (b2) edge[bend right=\bend,pos=\weightpos, gray] node[right] {\zerolabel} (t3);
\path[->] (b3) edge[bend right=\bendsingle,pos=\weightpos, gray] node[right] {\zerolabel} (t1);
\path[->] (b3) edge[bend right=\bendsingle,pos=\weightpos, gray] node[right] {\zerolabel} (t2);
\path[->] (b3) edge[bend right=\bend,pos=\weightpos, gray] node[right] {\zerolabel} (t3);

\path[->] (b2) edge[pos=0.6, below, inpolicy] node {$\mathbf{16}$} (b1);
\path[->] (b3) edge[pos=0.6, below] node {$16$} (b2);
\path[->] (b3) edge[bend left = 18, pos=0.6, below, inpolicy] node {$\mathbf{16}$} (b1);

\end{tikzpicture}
        \caption{Policy $\policy_{1,1}$}
    \end{subfigure}
    \begin{subfigure}{0.49\textwidth}
        \centering
        \begin{tikzpicture}[->,>=stealth,shorten >=1pt,auto,node distance=2.5cm,scale=0.7,transform shape]

\tikzstyle{state}=[circle,draw,minimum size=1.1cm]
\def\weightpos{0.8};
\def\bend{5};
\def\bendsingle{0};
\def\zerolabel{\,};

\node[state] (t1) at (4,0) {$t_1$};
\node[state] (t2) at (8,0) {$t_2$};
\node[state] (t3) at (12,0) {$t_3$};

\node[state] (b1) at (4,-4) {$b_1$};
\node[state] (b2) at (8,-4) {$b_2$};
\node[state] (b3) at (12,-4) {$b_3$};

\path[->] (t1) edge[loop above, inpolicy] node {$\mathbf{13}$} (t1);
\path[->] (t2) edge[loop above] node {$14$} (t2);
\path[->] (t3) edge[loop above] node {$15$} (t3);

\path[->] (t2) edge[pos=0.75, above, gray] node { } (t1);
\path[->] (t3) edge[pos=0.75, above, gray] node { } (t2);
\path[->] (t3) edge[bend right=18, pos=0.75, above, gray] node { } (t1);

\path[->] (t1) edge[bend right=\bend, pos=\weightpos] node[left] {$16$} (b1);
\path[->] (t2) edge[bend right=\bend, pos=\weightpos, inpolicy] node[left] {$\mathbf{16}$} (b2);
\path[->] (t2) edge[bend right=\bend, pos=\weightpos] node[left] {$16$} (b1);
\path[->] (t3) edge[bend right=\bend, pos=0.75] node[left] {$16$} (b1);
\path[->] (t3) edge[bend right=\bend, pos=\weightpos, inpolicy] node[left] {$\mathbf{16}$} (b2);
\path[->] (t3) edge[bend right=\bend, pos=\weightpos] node[left] {$16$} (b3);

\path[->] (b1) edge[bend right=\bend,pos=\weightpos, inpolicy] node[right] {$\mathbf{0}$} (t1);
\path[->] (b1) edge[bend right=\bend,pos=\weightpos, gray] node[right] {\zerolabel} (t2);
\path[->] (b1) edge[bend right=\bend,pos=\weightpos, gray] node[right] {\zerolabel} (t3);
\path[->] (b2) edge[bend right=\bend,pos=\weightpos, gray] node[right] {\zerolabel} (t1);
\path[->] (b2) edge[bend right=\bend,pos=\weightpos, gray] node[right] {\zerolabel} (t2);
\path[->] (b2) edge[bend right=\bend,pos=\weightpos, gray] node[right] {\zerolabel} (t3);
\path[->] (b3) edge[bend right=\bendsingle,pos=\weightpos, gray] node[right] {\zerolabel} (t1);
\path[->] (b3) edge[bend right=\bendsingle,pos=\weightpos, gray] node[right] {\zerolabel} (t2);
\path[->] (b3) edge[bend right=\bend,pos=\weightpos, gray] node[right] {\zerolabel} (t3);

\path[->] (b2) edge[pos=0.6, below, inpolicy] node {$\mathbf{16}$} (b1);
\path[->] (b3) edge[pos=0.6, below, inpolicy] node {$\mathbf{16}$} (b2);
\path[->] (b3) edge[bend left = 18, pos=0.6, below] node {$16$} (b1);

\end{tikzpicture}
        \caption{Policy $\policy_{1,2}$}
    \end{subfigure}
    \hfill
    \begin{subfigure}{0.49\textwidth}
        \centering
        \begin{tikzpicture}[->,>=stealth,shorten >=1pt,auto,node distance=2.5cm,scale=0.7,transform shape]

\tikzstyle{state}=[circle,draw,minimum size=1.1cm]
\def\weightpos{0.8};
\def\bend{5};
\def\bendsingle{0};
\def\zerolabel{\,};

\node[state] (t1) at (4,0) {$t_1$};
\node[state] (t2) at (8,0) {$t_2$};
\node[state] (t3) at (12,0) {$t_3$};

\node[state] (b1) at (4,-4) {$b_1$};
\node[state] (b2) at (8,-4) {$b_2$};
\node[state] (b3) at (12,-4) {$b_3$};

\path[->] (t1) edge[loop above, inpolicy] node {$\mathbf{13}$} (t1);
\path[->] (t2) edge[loop above, inpolicy] node {$\mathbf{14}$} (t2);
\path[->] (t3) edge[loop above] node {$15$} (t3);

\path[->] (t2) edge[pos=0.75, above, gray] node { } (t1);
\path[->] (t3) edge[pos=0.75, above, gray] node { } (t2);
\path[->] (t3) edge[bend right=18, pos=0.75, above, gray] node { } (t1);

\path[->] (t1) edge[bend right=\bend, pos=\weightpos] node[left] {$16$} (b1);
\path[->] (t2) edge[bend right=\bend, pos=\weightpos] node[left] {$16$} (b2);
\path[->] (t2) edge[bend right=\bend, pos=\weightpos] node[left] {$16$} (b1);
\path[->] (t3) edge[bend right=\bend, pos=0.75] node[left] {$16$} (b1);
\path[->] (t3) edge[bend right=\bend, pos=\weightpos] node[left] {$16$} (b2);
\path[->] (t3) edge[bend right=\bend, pos=\weightpos, inpolicy] node[left] {$\mathbf{16}$} (b3);

\path[->] (b1) edge[bend right=\bend,pos=\weightpos, inpolicy] node[right] {$\mathbf{0}$} (t1);
\path[->] (b1) edge[bend right=\bend,pos=\weightpos, gray] node[right] {\zerolabel} (t2);
\path[->] (b1) edge[bend right=\bend,pos=\weightpos, gray] node[right] {\zerolabel} (t3);
\path[->] (b2) edge[bend right=\bend,pos=\weightpos, gray] node[right] {\zerolabel} (t1);
\path[->] (b2) edge[bend right=\bend,pos=\weightpos, gray] node[right] {\zerolabel} (t2);
\path[->] (b2) edge[bend right=\bend,pos=\weightpos, gray] node[right] {\zerolabel} (t3);
\path[->] (b3) edge[bend right=\bendsingle,pos=\weightpos, gray] node[right] {\zerolabel} (t1);
\path[->] (b3) edge[bend right=\bendsingle,pos=\weightpos, gray] node[right] {\zerolabel} (t2);
\path[->] (b3) edge[bend right=\bend,pos=\weightpos, gray] node[right] {\zerolabel} (t3);

\path[->] (b2) edge[pos=0.6, below, inpolicy] node {$\mathbf{16}$} (b1);
\path[->] (b3) edge[pos=0.6, below, inpolicy] node {$\mathbf{16}$} (b2);
\path[->] (b3) edge[bend left = 18, pos=0.6, below] node {$16$} (b1);

\end{tikzpicture}
        \caption{Policy $\finalpol_1$}
    \end{subfigure}
    \caption{Part I: the sequence of policies appearing in Howard's policy iteration over our running example. Thick lines correspond to policy choices. Unlabeled (gray) edges have weight 0.}
    \label{fig:policy-sequence-1}
\end{figure*}
\begin{figure*}[hp]\ContinuedFloat
    \centering
    \begin{subfigure}{0.49\textwidth}
        \centering
        \begin{tikzpicture}[->,>=stealth,shorten >=1pt,auto,node distance=2.5cm,scale=0.7,transform shape]

\tikzstyle{state}=[circle,draw,minimum size=1.1cm]
\def\weightpos{0.8};
\def\bend{5};
\def\bendsingle{0};
\def\zerolabel{\,};

\node[state] (t1) at (4,0) {$t_1$};
\node[state] (t2) at (8,0) {$t_2$};
\node[state] (t3) at (12,0) {$t_3$};

\node[state] (b1) at (4,-4) {$b_1$};
\node[state] (b2) at (8,-4) {$b_2$};
\node[state] (b3) at (12,-4) {$b_3$};

\path[->] (t1) edge[loop above, inpolicy] node {$\mathbf{13}$} (t1);
\path[->] (t2) edge[loop above, inpolicy] node {$\mathbf{14}$} (t2);
\path[->] (t3) edge[loop above] node {$15$} (t3);

\path[->] (t2) edge[pos=0.75, above, gray] node { } (t1);
\path[->] (t3) edge[pos=0.75, above, inpolicy] node {$\mathbf{0}$} (t2);
\path[->] (t3) edge[bend right=18, pos=0.75, above, gray] node { } (t1);

\path[->] (t1) edge[bend right=\bend, pos=\weightpos] node[left] {$16$} (b1);
\path[->] (t2) edge[bend right=\bend, pos=\weightpos] node[left] {$16$} (b2);
\path[->] (t2) edge[bend right=\bend, pos=\weightpos] node[left] {$16$} (b1);
\path[->] (t3) edge[bend right=\bend, pos=0.75] node[left] {$16$} (b1);
\path[->] (t3) edge[bend right=\bend, pos=\weightpos] node[left] {$16$} (b2);
\path[->] (t3) edge[bend right=\bend, pos=\weightpos] node[left] {$16$} (b3);

\path[->] (b1) edge[bend right=\bend,pos=\weightpos, gray] node[right] {\zerolabel} (t1);
\path[->] (b1) edge[bend right=\bend,pos=0.85, inpolicy] node[right] {$\mathbf{0}$} (t2);
\path[->] (b1) edge[bend right=\bend,pos=\weightpos, gray] node[right] {\zerolabel} (t3);
\path[->] (b2) edge[bend right=\bend,pos=\weightpos, gray] node[right] {\zerolabel} (t1);
\path[->] (b2) edge[bend right=\bend,pos=0.8, inpolicy] node[right] {$\mathbf{0}$} (t2);
\path[->] (b2) edge[bend right=\bend,pos=\weightpos, gray] node[right] {\zerolabel} (t3);
\path[->] (b3) edge[bend right=\bendsingle,pos=\weightpos, gray] node[right] {\zerolabel} (t1);
\path[->] (b3) edge[bend right=\bendsingle,pos=0.9, inpolicy] node[right] {$\mathbf{0}$} (t2);
\path[->] (b3) edge[bend right=\bend,pos=\weightpos, gray] node[right] {\zerolabel} (t3);

\path[->] (b2) edge[pos=0.6, below] node {$16$} (b1);
\path[->] (b3) edge[pos=0.6, below] node {$16$} (b2);
\path[->] (b3) edge[bend left = 18, pos=0.6, below] node {$16$} (b1);

\end{tikzpicture}
        \caption{Policy $\initpol_2$}
    \end{subfigure}
    \hfill
    \begin{subfigure}{0.49\textwidth}
        \centering
        \begin{tikzpicture}[->,>=stealth,shorten >=1pt,auto,node distance=2.5cm,scale=0.7,transform shape]

\tikzstyle{state}=[circle,draw,minimum size=1.1cm]
\def\weightpos{0.8};
\def\bend{5};
\def\bendsingle{0};
\def\zerolabel{\,};

\node[state] (t1) at (4,0) {$t_1$};
\node[state] (t2) at (8,0) {$t_2$};
\node[state] (t3) at (12,0) {$t_3$};

\node[state] (b1) at (4,-4) {$b_1$};
\node[state] (b2) at (8,-4) {$b_2$};
\node[state] (b3) at (12,-4) {$b_3$};

\path[->] (t1) edge[loop above] node {$13$} (t1);
\path[->] (t2) edge[loop above, inpolicy] node {$\mathbf{14}$} (t2);
\path[->] (t3) edge[loop above] node {$15$} (t3);

\path[->] (t2) edge[pos=0.75, above, gray] node { } (t1);
\path[->] (t3) edge[pos=0.75, above, gray] node { } (t2);
\path[->] (t3) edge[bend right=18, pos=0.75, above, gray] node { } (t1);

\path[->] (t1) edge[bend right=\bend, pos=\weightpos, inpolicy] node[left] {$\mathbf{16}$} (b1);
\path[->] (t2) edge[bend right=\bend, pos=\weightpos] node[left] {$16$} (b2);
\path[->] (t2) edge[bend right=\bend, pos=\weightpos] node[left] {$16$} (b1);
\path[->] (t3) edge[bend right=\bend, pos=0.72, inpolicy] node[left] {$\mathbf{16}$} (b1);
\path[->] (t3) edge[bend right=\bend, pos=\weightpos] node[left] {$16$} (b2);
\path[->] (t3) edge[bend right=\bend, pos=\weightpos] node[left] {$16$} (b3);

\path[->] (b1) edge[bend right=\bend,pos=\weightpos, gray] node[right] {\zerolabel} (t1);
\path[->] (b1) edge[bend right=\bend,pos=\weightpos, inpolicy] node[right] {$\mathbf{0}$} (t2);
\path[->] (b1) edge[bend right=\bend,pos=\weightpos, gray] node[right] {\zerolabel} (t3);
\path[->] (b2) edge[bend right=\bend,pos=\weightpos, gray] node[right] {\zerolabel} (t1);
\path[->] (b2) edge[bend right=\bend,pos=\weightpos, gray] node[right] {\zerolabel} (t2);
\path[->] (b2) edge[bend right=\bend,pos=\weightpos, gray] node[right] {\zerolabel} (t3);
\path[->] (b3) edge[bend right=\bendsingle,pos=\weightpos, gray] node[right] {\zerolabel} (t1);
\path[->] (b3) edge[bend right=\bendsingle,pos=\weightpos, gray] node[right] {\zerolabel} (t2);
\path[->] (b3) edge[bend right=\bend,pos=\weightpos, gray] node[right] {\zerolabel} (t3);

\path[->] (b2) edge[pos=0.6, below, inpolicy] node {$\mathbf{16}$} (b1);
\path[->] (b3) edge[pos=0.6, below] node {$16$} (b2);
\path[->] (b3) edge[bend left = 18, pos=0.6, below, inpolicy] node {$\mathbf{16}$} (b1);

\end{tikzpicture}
        \caption{Policy $\policy_{2,1}$}
    \end{subfigure}
    \hfill
    \begin{subfigure}{0.49\textwidth}
        \centering
        \begin{tikzpicture}[->,>=stealth,shorten >=1pt,auto,node distance=2.5cm,scale=0.7,transform shape]

\tikzstyle{state}=[circle,draw,minimum size=1.1cm]
\def\weightpos{0.8};
\def\bend{5};
\def\bendsingle{0};
\def\zerolabel{\,};

\node[state] (t1) at (4,0) {$t_1$};
\node[state] (t2) at (8,0) {$t_2$};
\node[state] (t3) at (12,0) {$t_3$};

\node[state] (b1) at (4,-4) {$b_1$};
\node[state] (b2) at (8,-4) {$b_2$};
\node[state] (b3) at (12,-4) {$b_3$};

\path[->] (t1) edge[loop above] node {$13$} (t1);
\path[->] (t2) edge[loop above, inpolicy] node {$\mathbf{14}$} (t2);
\path[->] (t3) edge[loop above] node {$15$} (t3);

\path[->] (t2) edge[pos=0.75, above, gray] node { } (t1);
\path[->] (t3) edge[pos=0.75, above, gray] node { } (t2);
\path[->] (t3) edge[bend right=18, pos=0.75, above, gray] node { } (t1);

\path[->] (t1) edge[bend right=\bend, pos=\weightpos, inpolicy] node[left] {$\mathbf{16}$} (b1);
\path[->] (t2) edge[bend right=\bend, pos=\weightpos] node[left] {$16$} (b2);
\path[->] (t2) edge[bend right=\bend, pos=\weightpos] node[left] {$16$} (b1);
\path[->] (t3) edge[bend right=\bend, pos=0.75] node[left] {$16$} (b1);
\path[->] (t3) edge[bend right=\bend, pos=\weightpos, inpolicy] node[left] {$\mathbf{16}$} (b2);
\path[->] (t3) edge[bend right=\bend, pos=\weightpos] node[left] {$16$} (b3);

\path[->] (b1) edge[bend right=\bend,pos=\weightpos, gray] node[right] {\zerolabel} (t1);
\path[->] (b1) edge[bend right=\bend,pos=\weightpos, inpolicy] node[right] {$\mathbf{0}$} (t2);
\path[->] (b1) edge[bend right=\bend,pos=\weightpos, gray] node[right] {\zerolabel} (t3);
\path[->] (b2) edge[bend right=\bend,pos=\weightpos, gray] node[right] {\zerolabel} (t1);
\path[->] (b2) edge[bend right=\bend,pos=\weightpos, gray] node[right] {\zerolabel} (t2);
\path[->] (b2) edge[bend right=\bend,pos=\weightpos, gray] node[right] {\zerolabel} (t3);
\path[->] (b3) edge[bend right=\bendsingle,pos=\weightpos, gray] node[right] {\zerolabel} (t1);
\path[->] (b3) edge[bend right=\bendsingle,pos=\weightpos, gray] node[right] {\zerolabel} (t2);
\path[->] (b3) edge[bend right=\bend,pos=\weightpos, gray] node[right] {\zerolabel} (t3);

\path[->] (b2) edge[pos=0.6, below, inpolicy] node {$\mathbf{16}$} (b1);
\path[->] (b3) edge[pos=0.6, below, inpolicy] node {$\mathbf{16}$} (b2);
\path[->] (b3) edge[bend left = 18, pos=0.6, below] node {$16$} (b1);

\end{tikzpicture}
        \caption{Policy $\policy_{2,2}$}
    \end{subfigure}
    \hfill
    \begin{subfigure}{0.49\textwidth}
        \centering
        \begin{tikzpicture}[->,>=stealth,shorten >=1pt,auto,node distance=2.5cm,scale=0.7,transform shape]

\tikzstyle{state}=[circle,draw,minimum size=1.1cm]
\def\weightpos{0.8};
\def\bend{5};
\def\bendsingle{0};
\def\zerolabel{\,};

\node[state] (t1) at (4,0) {$t_1$};
\node[state] (t2) at (8,0) {$t_2$};
\node[state] (t3) at (12,0) {$t_3$};

\node[state] (b1) at (4,-4) {$b_1$};
\node[state] (b2) at (8,-4) {$b_2$};
\node[state] (b3) at (12,-4) {$b_3$};

\path[->] (t1) edge[loop above] node {$13$} (t1);
\path[->] (t2) edge[loop above, inpolicy] node {$\mathbf{14}$} (t2);
\path[->] (t3) edge[loop above] node {$15$} (t3);

\path[->] (t2) edge[pos=0.75, above, gray] node { } (t1);
\path[->] (t3) edge[pos=0.75, above, gray] node { } (t2);
\path[->] (t3) edge[bend right=18, pos=0.75, above, gray] node { } (t1);

\path[->] (t1) edge[bend right=\bend, pos=\weightpos, inpolicy] node[left] {$\mathbf{16}$} (b1);
\path[->] (t2) edge[bend right=\bend, pos=\weightpos] node[left] {$16$} (b2);
\path[->] (t2) edge[bend right=\bend, pos=\weightpos] node[left] {$16$} (b1);
\path[->] (t3) edge[bend right=\bend, pos=0.75] node[left] {$16$} (b1);
\path[->] (t3) edge[bend right=\bend, pos=\weightpos] node[left] {$16$} (b2);
\path[->] (t3) edge[bend right=\bend, pos=\weightpos, inpolicy] node[left] {$\mathbf{16}$} (b3);

\path[->] (b1) edge[bend right=\bend,pos=\weightpos, gray] node[right] {\zerolabel} (t1);
\path[->] (b1) edge[bend right=\bend,pos=\weightpos, inpolicy] node[right] {$\mathbf{0}$} (t2);
\path[->] (b1) edge[bend right=\bend,pos=\weightpos, gray] node[right] {\zerolabel} (t3);
\path[->] (b2) edge[bend right=\bend,pos=\weightpos, gray] node[right] {\zerolabel} (t1);
\path[->] (b2) edge[bend right=\bend,pos=\weightpos, gray] node[right] {\zerolabel} (t2);
\path[->] (b2) edge[bend right=\bend,pos=\weightpos, gray] node[right] {\zerolabel} (t3);
\path[->] (b3) edge[bend right=\bendsingle,pos=\weightpos, gray] node[right] {\zerolabel} (t1);
\path[->] (b3) edge[bend right=\bendsingle,pos=\weightpos, gray] node[right] {\zerolabel} (t2);
\path[->] (b3) edge[bend right=\bend,pos=\weightpos, gray] node[right] {\zerolabel} (t3);

\path[->] (b2) edge[pos=0.6, below, inpolicy] node {$\mathbf{16}$} (b1);
\path[->] (b3) edge[pos=0.6, below, inpolicy] node {$\mathbf{16}$} (b2);
\path[->] (b3) edge[bend left = 18, pos=0.6, below] node {$16$} (b1);

\end{tikzpicture}
        \caption{Policy $\policy_{2,3}$}
    \end{subfigure}
    \hfill
    \begin{subfigure}{0.49\textwidth}
        \centering
        \begin{tikzpicture}[->,>=stealth,shorten >=1pt,auto,node distance=2.5cm,scale=0.7,transform shape]

\tikzstyle{state}=[circle,draw,minimum size=1.1cm]
\def\weightpos{0.8};
\def\bend{5};
\def\bendsingle{0};
\def\zerolabel{\,};

\node[state] (t1) at (4,0) {$t_1$};
\node[state] (t2) at (8,0) {$t_2$};
\node[state] (t3) at (12,0) {$t_3$};

\node[state] (b1) at (4,-4) {$b_1$};
\node[state] (b2) at (8,-4) {$b_2$};
\node[state] (b3) at (12,-4) {$b_3$};

\path[->] (t1) edge[loop above] node {$13$} (t1);
\path[->] (t2) edge[loop above, inpolicy] node {$\mathbf{14}$} (t2);
\path[->] (t3) edge[loop above, inpolicy] node {$\mathbf{15}$} (t3);

\path[->] (t2) edge[pos=0.75, above, gray] node { } (t1);
\path[->] (t3) edge[pos=0.75, above, gray] node { } (t2);
\path[->] (t3) edge[bend right=18, pos=0.75, above, gray] node { } (t1);

\path[->] (t1) edge[bend right=\bend, pos=\weightpos, inpolicy] node[left] {$\mathbf{16}$} (b1);
\path[->] (t2) edge[bend right=\bend, pos=\weightpos] node[left] {$16$} (b2);
\path[->] (t2) edge[bend right=\bend, pos=\weightpos] node[left] {$16$} (b1);
\path[->] (t3) edge[bend right=\bend, pos=0.75] node[left] {$16$} (b1);
\path[->] (t3) edge[bend right=\bend, pos=\weightpos] node[left] {$16$} (b2);
\path[->] (t3) edge[bend right=\bend, pos=\weightpos] node[left] {$16$} (b3);

\path[->] (b1) edge[bend right=\bend,pos=\weightpos, gray] node[right] {\zerolabel} (t1);
\path[->] (b1) edge[bend right=\bend,pos=\weightpos, inpolicy] node[right] {$\mathbf{0}$} (t2);
\path[->] (b1) edge[bend right=\bend,pos=\weightpos, gray] node[right] {\zerolabel} (t3);
\path[->] (b2) edge[bend right=\bend,pos=\weightpos, gray] node[right] {\zerolabel} (t1);
\path[->] (b2) edge[bend right=\bend,pos=\weightpos, gray] node[right] {\zerolabel} (t2);
\path[->] (b2) edge[bend right=\bend,pos=\weightpos, gray] node[right] {\zerolabel} (t3);
\path[->] (b3) edge[bend right=\bendsingle,pos=\weightpos, gray] node[right] {\zerolabel} (t1);
\path[->] (b3) edge[bend right=\bendsingle,pos=\weightpos, gray] node[right] {\zerolabel} (t2);
\path[->] (b3) edge[bend right=\bend,pos=\weightpos, gray] node[right] {\zerolabel} (t3);

\path[->] (b2) edge[pos=0.6, below, inpolicy] node {$\mathbf{16}$} (b1);
\path[->] (b3) edge[pos=0.6, below, inpolicy] node {$\mathbf{16}$} (b2);
\path[->] (b3) edge[bend left = 18, pos=0.6, below] node {$16$} (b1);

\end{tikzpicture}
        \caption{Policy $\finalpol_2$}
    \end{subfigure}
    \hfill
    \begin{subfigure}{0.49\textwidth}
        \centering
        \begin{tikzpicture}[->,>=stealth,shorten >=1pt,auto,node distance=2.5cm,scale=0.7,transform shape]

\tikzstyle{state}=[circle,draw,minimum size=1.1cm]
\def\weightpos{0.8};
\def\bend{5};
\def\bendsingle{0};
\def\zerolabel{\,};

\node[state] (t1) at (4,0) {$t_1$};
\node[state] (t2) at (8,0) {$t_2$};
\node[state] (t3) at (12,0) {$t_3$};

\node[state] (b1) at (4,-4) {$b_1$};
\node[state] (b2) at (8,-4) {$b_2$};
\node[state] (b3) at (12,-4) {$b_3$};

\path[->] (t1) edge[loop above] node {$13$} (t1);
\path[->] (t2) edge[loop above, inpolicy] node {$\mathbf{14}$} (t2);
\path[->] (t3) edge[loop above, inpolicy] node {$\mathbf{15}$} (t3);

\path[->] (t2) edge[pos=0.75, above, gray] node { } (t1);
\path[->] (t3) edge[pos=0.75, above, gray] node { } (t2);
\path[->] (t3) edge[bend right=18, pos=0.75, above, gray] node { } (t1);

\path[->] (t1) edge[bend right=\bend, pos=\weightpos, inpolicy] node[left] {$\mathbf{16}$} (b1);
\path[->] (t2) edge[bend right=\bend, pos=\weightpos] node[left] {$16$} (b2);
\path[->] (t2) edge[bend right=\bend, pos=\weightpos] node[left] {$16$} (b1);
\path[->] (t3) edge[bend right=\bend, pos=0.75] node[left] {$16$} (b1);
\path[->] (t3) edge[bend right=\bend, pos=\weightpos] node[left] {$16$} (b2);
\path[->] (t3) edge[bend right=\bend, pos=\weightpos] node[left] {$16$} (b3);

\path[->] (b1) edge[bend right=\bend,pos=\weightpos, gray] node[right] {\zerolabel} (t1);
\path[->] (b1) edge[bend right=\bend,pos=\weightpos, gray] node[right] {\zerolabel} (t2);
\path[->] (b1) edge[bend right=\bend,pos=\weightpos, inpolicy] node[right] {$\mathbf{0}$} (t3);
\path[->] (b2) edge[bend right=\bend,pos=\weightpos, gray] node[right] {\zerolabel} (t1);
\path[->] (b2) edge[bend right=\bend,pos=\weightpos, gray] node[right] {\zerolabel} (t2);
\path[->] (b2) edge[bend right=\bend,pos=\weightpos, inpolicy] node[right] {$\mathbf{0}$} (t3);
\path[->] (b3) edge[bend right=\bendsingle,pos=\weightpos, gray] node[right] {\zerolabel} (t1);
\path[->] (b3) edge[bend right=\bendsingle,pos=\weightpos, gray] node[right] {\zerolabel} (t2);
\path[->] (b3) edge[bend right=\bend,pos=\weightpos, inpolicy] node[right] {$\mathbf{0}$} (t3);

\path[->] (b2) edge[pos=0.6, below] node {$16$} (b1);
\path[->] (b3) edge[pos=0.6, below] node {$16$} (b2);
\path[->] (b3) edge[bend left = 18, pos=0.6, below] node {$16$} (b1);

\end{tikzpicture}
        \caption{Policy $\initpol_3$}
    \end{subfigure}
    \hfill
    \begin{subfigure}{0.49\textwidth}
        \centering
        \begin{tikzpicture}[->,>=stealth,shorten >=1pt,auto,node distance=2.5cm,scale=0.7,transform shape]

\tikzstyle{state}=[circle,draw,minimum size=1.1cm]
\def\weightpos{0.8};
\def\bend{5};
\def\bendsingle{0};
\def\zerolabel{\,};

\node[state] (t1) at (4,0) {$t_1$};
\node[state] (t2) at (8,0) {$t_2$};
\node[state] (t3) at (12,0) {$t_3$};

\node[state] (b1) at (4,-4) {$b_1$};
\node[state] (b2) at (8,-4) {$b_2$};
\node[state] (b3) at (12,-4) {$b_3$};

\path[->] (t1) edge[loop above] node {$13$} (t1);
\path[->] (t2) edge[loop above] node {$14$} (t2);
\path[->] (t3) edge[loop above, inpolicy] node {$\mathbf{15}$} (t3);

\path[->] (t2) edge[pos=0.75, above, gray] node { } (t1);
\path[->] (t3) edge[pos=0.75, above, gray] node { } (t2);
\path[->] (t3) edge[bend right=18, pos=0.75, above, gray] node { } (t1);

\path[->] (t1) edge[bend right=\bend, pos=\weightpos, inpolicy] node[left] {$\mathbf{16}$} (b1);
\path[->] (t2) edge[bend right=\bend, pos=\weightpos] node[left] {$16$} (b2);
\path[->] (t2) edge[bend right=\bend, pos=\weightpos, inpolicy] node[left] {$\mathbf{16}$} (b1);
\path[->] (t3) edge[bend right=\bend, pos=0.75] node[left] {$16$} (b1);
\path[->] (t3) edge[bend right=\bend, pos=\weightpos] node[left] {$16$} (b2);
\path[->] (t3) edge[bend right=\bend, pos=\weightpos] node[left] {$16$} (b3);

\path[->] (b1) edge[bend right=\bend,pos=\weightpos, gray] node[right] {\zerolabel} (t1);
\path[->] (b1) edge[bend right=\bend,pos=\weightpos, gray] node[right] {\zerolabel} (t2);
\path[->] (b1) edge[bend right=\bend,pos=\weightpos, inpolicy] node[right] {$\mathbf{0}$} (t3);
\path[->] (b2) edge[bend right=\bend,pos=\weightpos, gray] node[right] {\zerolabel} (t1);
\path[->] (b2) edge[bend right=\bend,pos=\weightpos, gray] node[right] {\zerolabel} (t2);
\path[->] (b2) edge[bend right=\bend,pos=\weightpos, gray] node[right] {\zerolabel} (t3);
\path[->] (b3) edge[bend right=\bendsingle,pos=\weightpos, gray] node[right] {\zerolabel} (t1);
\path[->] (b3) edge[bend right=\bendsingle,pos=\weightpos, gray] node[right] {\zerolabel} (t2);
\path[->] (b3) edge[bend right=\bend,pos=\weightpos, gray] node[right] {\zerolabel} (t3);

\path[->] (b2) edge[pos=0.6, below, inpolicy] node {$\mathbf{16}$} (b1);
\path[->] (b3) edge[pos=0.6, below] node {$16$} (b2);
\path[->] (b3) edge[bend left = 18, pos=0.6, below, inpolicy] node {$\mathbf{16}$} (b1);

\end{tikzpicture}
        \caption{Policy $\policy_{3,1}$}
    \end{subfigure}
    \hfill
    \begin{subfigure}{0.49\textwidth}
        \centering
        \begin{tikzpicture}[->,>=stealth,shorten >=1pt,auto,node distance=2.5cm,scale=0.7,transform shape]

\tikzstyle{state}=[circle,draw,minimum size=1.1cm]
\def\weightpos{0.8};
\def\bend{5};
\def\bendsingle{0};
\def\zerolabel{\,};

\node[state] (t1) at (4,0) {$t_1$};
\node[state] (t2) at (8,0) {$t_2$};
\node[state] (t3) at (12,0) {$t_3$};

\node[state] (b1) at (4,-4) {$b_1$};
\node[state] (b2) at (8,-4) {$b_2$};
\node[state] (b3) at (12,-4) {$b_3$};

\path[->] (t1) edge[loop above] node {$13$} (t1);
\path[->] (t2) edge[loop above] node {$14$} (t2);
\path[->] (t3) edge[loop above, inpolicy] node {$\mathbf{15}$} (t3);

\path[->] (t2) edge[pos=0.75, above, gray] node { } (t1);
\path[->] (t3) edge[pos=0.75, above, gray] node { } (t2);
\path[->] (t3) edge[bend right=18, pos=0.75, above, gray] node { } (t1);

\path[->] (t1) edge[bend right=\bend, pos=\weightpos, inpolicy] node[left] {$\mathbf{16}$} (b1);
\path[->] (t2) edge[bend right=\bend, pos=\weightpos, inpolicy] node[left] {$\mathbf{16}$} (b2);
\path[->] (t2) edge[bend right=\bend, pos=\weightpos] node[left] {$16$} (b1);
\path[->] (t3) edge[bend right=\bend, pos=0.75] node[left] {$16$} (b1);
\path[->] (t3) edge[bend right=\bend, pos=\weightpos] node[left] {$16$} (b2);
\path[->] (t3) edge[bend right=\bend, pos=\weightpos] node[left] {$16$} (b3);

\path[->] (b1) edge[bend right=\bend,pos=\weightpos, gray] node[right] {\zerolabel} (t1);
\path[->] (b1) edge[bend right=\bend,pos=\weightpos, gray] node[right] {\zerolabel} (t2);
\path[->] (b1) edge[bend right=\bend,pos=\weightpos, inpolicy] node[right] {$\mathbf{0}$} (t3);
\path[->] (b2) edge[bend right=\bend,pos=\weightpos, gray] node[right] {\zerolabel} (t1);
\path[->] (b2) edge[bend right=\bend,pos=\weightpos, gray] node[right] {\zerolabel} (t2);
\path[->] (b2) edge[bend right=\bend,pos=\weightpos, gray] node[right] {\zerolabel} (t3);
\path[->] (b3) edge[bend right=\bendsingle,pos=\weightpos, gray] node[right] {\zerolabel} (t1);
\path[->] (b3) edge[bend right=\bendsingle,pos=\weightpos, gray] node[right] {\zerolabel} (t2);
\path[->] (b3) edge[bend right=\bend,pos=\weightpos, gray] node[right] {\zerolabel} (t3);

\path[->] (b2) edge[pos=0.6, below, inpolicy] node {$\mathbf{16}$} (b1);
\path[->] (b3) edge[pos=0.6, below, inpolicy] node {$\mathbf{16}$} (b2);
\path[->] (b3) edge[bend left = 18, pos=0.6, below] node {$16$} (b1);

\end{tikzpicture}
        \caption{Policy $\policy_{3,2}$}
    \end{subfigure}
    \caption{Part II: the sequence of policies appearing in Howard's policy iteration over our running example. Thick lines correspond to policy choices. Unlabeled (gray) edges have weight 0.}
    \label{fig:policy-sequence-2}
\end{figure*}
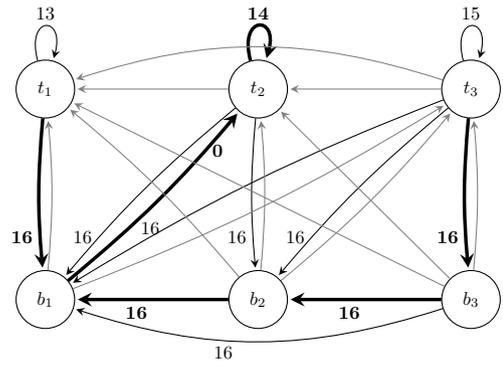
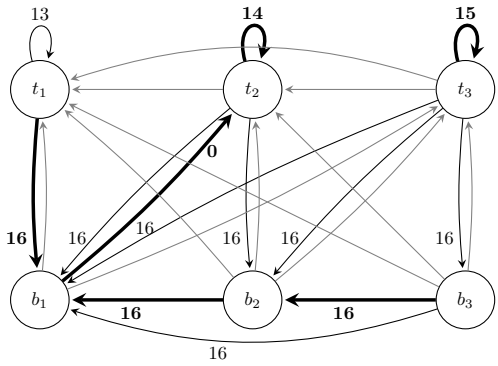

\newpage
\section{The policies in the general case}
\label{sec:general-pol}
We illustrate the sequence of policies that appear from policy $\initpol_i$ to policy $\initpol_{i+1}$. To keep the display clear, we omit edge weights and edges not in the policy from the figures and show only the edges in the current policy.
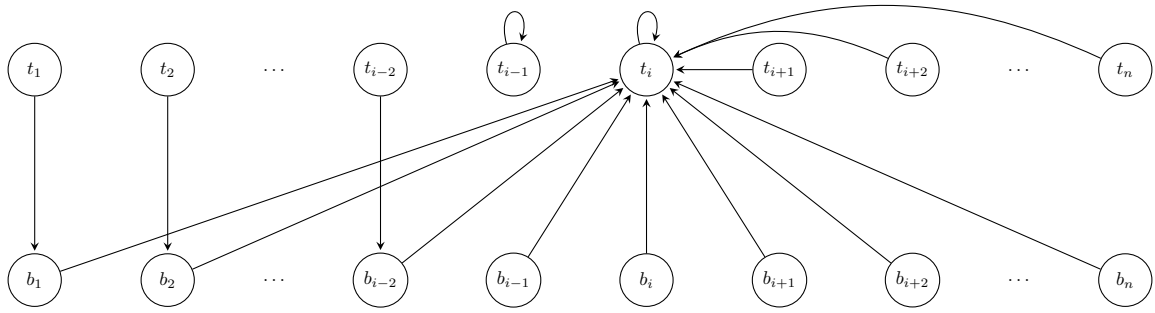
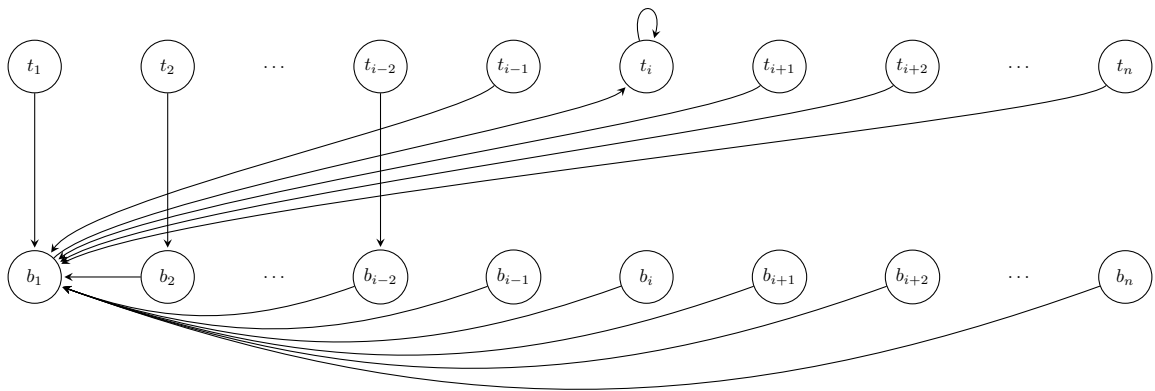
\begin{figure*}[h]
    \centering
    \begin{subfigure}{0.99\textwidth}
        \centering
        \begin{tikzpicture}[->,>=stealth,shorten >=1pt,auto,node distance=3.5cm,scale=0.7,transform shape]

    \tikzstyle{state}=[circle,draw,minimum size=01cm]
    \def\weightpos{0.8};
    \def\bend{5};
    \def\bendsingle{0};
    \def\zerolabel{\,};
    
    \node[state] (t1) at (0,0) {$t_1$};
    \node[state] (t2) at (2.5,0) {$t_2$};

    \node (dots1) at (4.5,0) {$\ldots$};
    
    \node[state] (t3) at (6.5,0) {$t_{i-2}$};
    \node[state] (t4) at (9,0) {$t_{i-1}$};

    \node[state] (t5) at (11.5,0) {$t_{i}$};
    \node[state] (t6) at (14,0) {$t_{i+1}$};
    \node[state] (t7) at (16.5,0) {$t_{i+2}$};

    \node (dots2) at (18.5,0) {$\ldots$};
    
    \node[state] (t8) at (20.5,0) {$t_n$};
    
    \node[state] (b1) at (0,-4) {$b_1$};
    \node[state] (b2) at (2.5,-4) {$b_2$};
    
    \node (dots3) at (4.5,-4) {$\ldots$};
    
    \node[state] (b3) at (6.5,-4) {$b_{i-2}$};
    \node[state] (b4) at (9,-4) {$b_{i-1}$};
    \node[state] (b5) at (11.5,-4) {$b_{i}$};
    \node[state] (b6) at (14,-4) {$b_{i+1}$};
    \node[state] (b7) at (16.5,-4) {$b_{i+2}$};
    
    \node (dots4) at (18.5,-4) {$\ldots$};
    
    \node[state] (b8) at (20.5,-4) {$b_n$};

    \path[->] (t4) edge[loop above] node {} (t4);
    \path[->] (t5) edge[loop above] node {} (t5);
    
    \path[->] (t1) edge (b1);
    \path[->] (t2) edge (b2);
    \path[->] (t3) edge (b3);
    
    \path[->] (b1) edge[bend right=0] (t5);
    \path[->] (b2) edge[bend right=0] (t5);
    \path[->] (b3) edge[bend right=0] (t5);
    \path[->] (b4) edge[bend right=0] (t5);
    \path[->] (b5) edge[bend right=0] (t5);
    \path[->] (b6) edge[bend right=0] (t5);
    \path[->] (b7) edge[bend right=0] (t5);
    \path[->] (b8) edge[bend right=0] (t5);

    \path[->] (t6) edge[bend right=0] (t5);
    \path[->] (t7) edge[bend right=25] (t5);
    \path[->] (t8) edge[bend right=25] (t5);

\end{tikzpicture}
    
        \caption{Policy $\initpol_{i}$}
    \end{subfigure} 
    \par\bigskip
    \begin{subfigure}{0.99\textwidth}
        \centering
        \begin{tikzpicture}[->,>=stealth,shorten >=1pt,auto,node distance=3.5cm,scale=0.7,transform shape]

    \tikzstyle{state}=[circle,draw,minimum size=01cm]
    \def\weightpos{0.8};
    \def\bend{5};
    \def\bendsingle{0};
    \def\zerolabel{\,};
    
    \node[state] (t1) at (0,0) {$t_1$};
    \node[state] (t2) at (2.5,0) {$t_2$};

    \node (dots1) at (4.5,0) {$\ldots$};
    
    \node[state] (t3) at (6.5,0) {$t_{i-2}$};
    \node[state] (t4) at (9,0) {$t_{i-1}$};

    \node[state] (t5) at (11.5,0) {$t_{i}$};
    \node[state] (t6) at (14,0) {$t_{i+1}$};
    \node[state] (t7) at (16.5,0) {$t_{i+2}$};

    \node (dots2) at (18.5,0) {$\ldots$};
    
    \node[state] (t8) at (20.5,0) {$t_n$};
    
    \node[state] (b1) at (0,-4) {$b_1$};
    \node[state] (b2) at (2.5,-4) {$b_2$};
    
    \node (dots3) at (4.5,-4) {$\ldots$};
    
    \node[state] (b3) at (6.5,-4) {$b_{i-2}$};
    \node[state] (b4) at (9,-4) {$b_{i-1}$};
    \node[state] (b5) at (11.5,-4) {$b_{i}$};
    \node[state] (b6) at (14,-4) {$b_{i+1}$};
    \node[state] (b7) at (16.5,-4) {$b_{i+2}$};
    
    \node (dots4) at (18.5,-4) {$\ldots$};
    
    \node[state] (b8) at (20.5,-4) {$b_n$};

    \path[->] (t5) edge[loop above] node {} (t5);
    
    \path[->] (t1) edge (b1);
    \path[->] (t2) edge (b2);
    \path[->] (t3) edge (b3);

    \path[->] (b2) edge (b1);
    \path[->] (b3) edge[bend left=20] (b1);
    \path[->] (b4) edge[bend left=20] (b1);
    \path[->] (b5) edge[bend left=20] (b1);
    \path[->] (b6) edge[bend left=20] (b1);
    \path[->] (b7) edge[bend left=20] (b1);
    \path[->] (b8) edge[bend left=20] (b1);
    
    \draw[->] (t4) .. controls ++(-1,-1) and ++(1,1.5) .. (b1);
    \draw[->] (t6) .. controls ++(-1,-1) and ++(2,1.5) .. (b1);
    \draw[->] (t7) .. controls ++(-1,-1) and ++(2.5,1.5) .. (b1);
    \draw[->] (t8) .. controls ++(-1,-1) and ++(3,1.5) .. (b1);

    \draw[->] (b1) .. controls ++(1.5,1.5) and ++(-1,-1) .. (t5);

\end{tikzpicture}
    
        \caption{Policy $\policy_{i,1}$.}
    \end{subfigure}
    \caption{Part I: policies appearing in Howard's policy iteration on $P_n$. Only edges in the policy are shown; edge weights are omitted.}
\end{figure*}

\begin{figure*}[]\ContinuedFloat
    \centering
    \rotatebox{90}{
        \begin{minipage}{0.85\textheight}  
            \centering
            \begin{subfigure}{0.8\textheight}
                \centering
                \begin{tikzpicture}[->,>=stealth,shorten >=1pt,auto,node distance=3.5cm,scale=0.7,transform shape]

    \tikzstyle{state}=[circle,draw,minimum size=01cm]
    \def\weightpos{0.8};
    \def\bend{5};
    \def\bendsingle{0};
    \def\zerolabel{\,};
    
    \node[state] (t1) at (-2,0) {$t_1$};
    \node[state] (t2) at (0.5,0) {$t_2$};

    \node (dots1) at (2.5,0) {$\ldots$};
    
    \node[state] (t3) at (4.5,0) {$t_{j}$};
    \node[state] (t4) at (7,0) {$t_{j+1}$};
    \node[state] (t5) at (9.5,0) {$t_{j+2}$};

    \node (dots5) at (11.5,0) {$\ldots$};

    \node[state] (t6) at (13.5,0) {$t_{i-1}$};
    \node[state] (t7) at (16,0) {$t_{i}$};
    \node[state] (t8) at (18.5,0) {$t_{i+1}$};

    \node (dots2) at (20.5,0) {$\ldots$};
    
    \node[state] (t9) at (22.5,0) {$t_n$};
    
    \node[state] (b1) at (-2,-4) {$b_1$};
    \node[state] (b2) at (0.5,-4) {$b_2$};
    
    \node (dots3) at (2.5,-4) {$\ldots$};
    
    \node[state] (b3) at (4.5,-4) {$b_{j}$};
    \node[state] (b4) at (7,-4) {$b_{j+1}$};
    \node[state] (b5) at (9.5,-4) {$b_{j+2}$};

    \node (dots6) at (11.5,-4) {$\ldots$};
    
    \node[state] (b6) at (13.5,-4) {$b_{i-1}$};
    \node[state] (b7) at (16,-4) {$b_{i}$};
    \node[state] (b8) at (18.5,-4) {$b_{i+1}$};
    
    \node (dots4) at (20.5,-4) {$\ldots$};
    
    \node[state] (b9) at (22.5,-4) {$b_n$};

    \path[->] (t7) edge[loop above] node {} (t7);
 
    \path[->] (t1) edge (b1);
    \path[->] (t2) edge (b2);
    \path[->] (t3) edge (b3);

       \draw[->] (t4) .. controls ++(-1,-2) and ++(1,1.5) .. (b3);
       \draw[->] (t5) .. controls ++(-1,-1) and ++(1.5,1.5) .. (b3);
       \draw[->] (t6) .. controls ++(-1,-1) and ++(2,1.5) .. (b3);
       \draw[->] (t8) .. controls ++(-1,-1) and ++(3,1.5) .. (b3);
       \draw[->] (t9) .. controls ++(-1,-1) and ++(3.5,1.5) .. (b3); 

    \draw[->] (b1) .. controls ++(2,2) and ++(-4,-2) .. (t7);
    
    \path[->] (b2) edge (b1);
    \path[->] (b4) edge (b3);
    
    \draw[-] (b3) -- +(-1.2,0);
    \node at (2.5,-4) {$\ldots$};
    \draw[->] ([xshift=0.7cm]b2.east) -- (b2);
    
    \path[->] (b5) edge[bend left=25] (b3);
    \path[->] (b6) edge[bend left=25] (b3);
    \path[->] (b7) edge[bend left=25] (b3);
    \path[->] (b8) edge[bend left=25] (b3);
    \path[->] (b9) edge[bend left=25] (b3);
    
\end{tikzpicture}
    
                \caption{Policy $\policy_{i,j}$}
            \end{subfigure}
            \par\bigskip
            \begin{subfigure}{0.8\textheight}
                \centering
                \begin{tikzpicture}[->,>=stealth,shorten >=1pt,auto,node distance=3.5cm,scale=0.7,transform shape]

    \tikzstyle{state}=[circle,draw,minimum size=01cm]
    \def\weightpos{0.8};
    \def\bend{5};
    \def\bendsingle{0};
    \def\zerolabel{\,};
    
    \node[state] (t1) at (-2,0) {$t_1$};
    \node[state] (t2) at (0.5,0) {$t_2$};

    \node (dots1) at (2.5,0) {$\ldots$};
    
    \node[state] (t3) at (4.5,0) {$t_{j}$};
    \node[state] (t4) at (7,0) {$t_{j+1}$};
    \node[state] (t5) at (9.5,0) {$t_{j+2}$};

    \node (dots5) at (11.5,0) {$\ldots$};

    \node[state] (t6) at (13.5,0) {$t_{i-1}$};
    \node[state] (t7) at (16,0) {$t_{i}$};
    \node[state] (t8) at (18.5,0) {$t_{i+1}$};

    \node (dots2) at (20.5,0) {$\ldots$};
    
    \node[state] (t9) at (22.5,0) {$t_n$};
    
    \node[state] (b1) at (-2,-4) {$b_1$};
    \node[state] (b2) at (0.5,-4) {$b_2$};
    
    \node (dots3) at (2.5,-4) {$\ldots$};
    
    \node[state] (b3) at (4.5,-4) {$b_{j}$};
    \node[state] (b4) at (7,-4) {$b_{j+1}$};
    \node[state] (b5) at (9.5,-4) {$b_{j+2}$};

    \node (dots6) at (11.5,-4) {$\ldots$};
    
    \node[state] (b6) at (13.5,-4) {$b_{i-1}$};
    \node[state] (b7) at (16,-4) {$b_{i}$};
    \node[state] (b8) at (18.5,-4) {$b_{i+1}$};
    
    \node (dots4) at (20.5,-4) {$\ldots$};
    
    \node[state] (b9) at (22.5,-4) {$b_n$};

    \path[->] (t7) edge[loop above] node {} (t7);
 
    \path[->] (t1) edge (b1);
    \path[->] (t2) edge (b2);
    \path[->] (t3) edge (b3);
    \path[->, inpolicy] (t4) edge (b4);
    
       \draw[->, inpolicy] (t5) .. controls ++(-1,-2) and ++(1,1.5) .. (b4);
       \draw[->, inpolicy] (t6) .. controls ++(-1,-1) and ++(1.5,1.5) .. (b4);
       \draw[->, inpolicy] (t8) .. controls ++(-1,-1) and ++(2.5,1.5) .. (b4);
       \draw[->, inpolicy] (t9) .. controls ++(-1,-1) and ++(3,1.5) .. (b4);

    \draw[->] (b1) .. controls ++(2,2) and ++(-4,-2) .. (t7);
    
    \path[->] (b2) edge (b1);
    \path[->] (b4) edge (b3);
    \path[->, inpolicy] (b5) edge (b4);
    
    \draw[-] (b3) -- +(-1.2,0);
    \node at (2.5,-4) {$\ldots$};
    \draw[->] ([xshift=0.7cm]b2.east) -- (b2);
    
    \path[->] (b6) edge[bend left=25, inpolicy] (b4);
    \path[->] (b7) edge[bend left=25, inpolicy] (b4);
    \path[->] (b8) edge[bend left=25, inpolicy] (b4);
    \path[->] (b9) edge[bend left=25, inpolicy] (b4);
    
\end{tikzpicture}
    
                \caption{Policy $\policy_{i,j+1}$}
            \end{subfigure}
        \end{minipage}
    }
    \caption{Part II: Comparison of policies $\policy_{i,j}$ and $\policy_{i,j+1}$, with $j>1$, appearing in Howard's policy iteration on $P_n$. Only edges in the policy are shown; edge weights are omitted. The edges in $\policy_{i,j+1}$ that differ from $\policy_{i,j}$ are bold.}
    
\end{figure*}
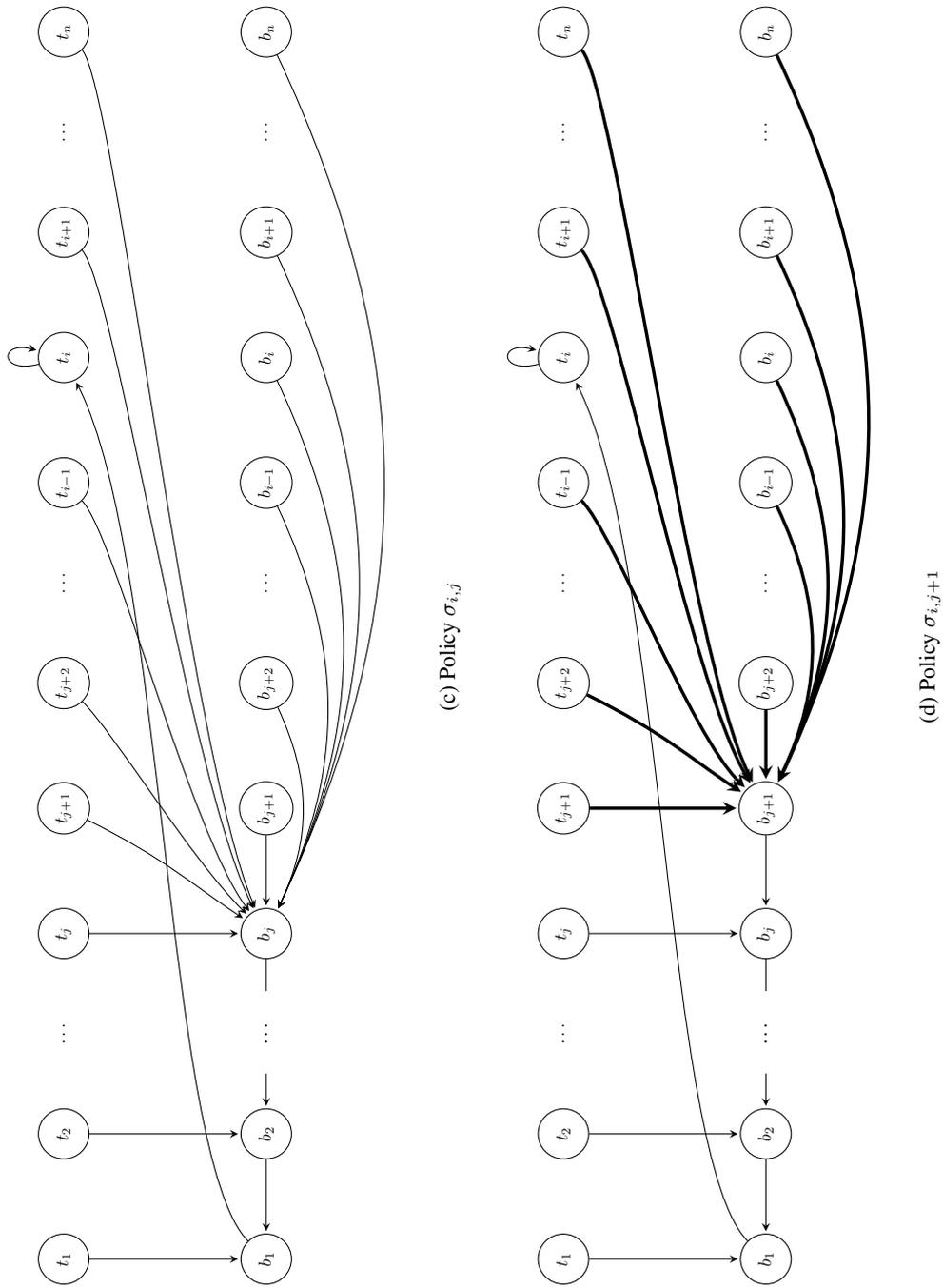

\begin{figure*}[h]\ContinuedFloat
\centering
    \begin{subfigure}{0.99\textwidth}
        \centering
        \begin{tikzpicture}[->,>=stealth,shorten >=1pt,auto,node distance=3.5cm,scale=0.7,transform shape]

    \tikzstyle{state}=[circle,draw,minimum size=01cm]
    \def\weightpos{0.8};
    \def\bend{5};
    \def\bendsingle{0};
    \def\zerolabel{\,};
    
    \node[state] (t1) at (0,0) {$t_1$};
    \node[state] (t2) at (2.5,0) {$t_2$};

    \node (dots1) at (4.5,0) {$\ldots$};
    
    \node[state] (t3) at (6.5,0) {$t_{i-1}$};
    \node[state] (t4) at (9,0) {$t_i$};

    \node[state] (t5) at (11.5,0) {$t_{i+1}$};
    \node[state] (t6) at (14,0) {$t_{i+2}$};
    \node[state] (t7) at (16.5,0) {$t_{i+3}$};

    \node (dots2) at (18.5,0) {$\ldots$};
    
    \node[state] (t8) at (20.5,0) {$t_n$};
    
    \node[state] (b1) at (0,-4) {$b_1$};
    \node[state] (b2) at (2.5,-4) {$b_2$};
    
    \node (dots3) at (4.5,-4) {$\ldots$};
    
    \node[state] (b3) at (6.5,-4) {$b_{i-1}$};
    \node[state] (b4) at (9,-4) {$b_i$};
    \node[state] (b5) at (11.5,-4) {$b_{i+1}$};
    \node[state] (b6) at (14,-4) {$b_{i+2}$};
    \node[state] (b7) at (16.5,-4) {$b_{i+3}$};
    
    \node (dots4) at (18.5,-4) {$\ldots$};
    
    \node[state] (b8) at (20.5,-4) {$b_n$};

    \path[->] (t4) edge[loop above] node {} (t4);
    
    \path[->] (t1) edge (b1);
    \path[->] (t2) edge (b2);
    \path[->] (t3) edge (b3);
    \path[->] (t5) edge (b5);
    \path[->] (t6) edge (b5);
    \path[->] (t7) edge (b5);
    \path[->] (t8) edge (b5);
    
    \path[->] (b1) edge[bend right=0] (t4);
    
    \path[->] (b2) edge (b1);
    \path[->] (b4) edge (b3);
    \path[->] (b5) edge (b4);
    \path[->] (b6) edge (b5);
    
    \draw[-] (b3) -- +(-1.2,0);
    \node at (4.5,-4) {$\ldots$};
    \draw[->] ([xshift=0.7cm]b2.east) -- (b2);
    
    \path[->] (b7) edge[bend left=25] (b5);
    \path[->] (b8) edge[bend left=30] (b5);

\end{tikzpicture}
    
        \caption{Policy $\policy_{i,i+1}$}
    \end{subfigure} 
    \par\bigskip
    \begin{subfigure}{0.99\textwidth}
        \centering
        \begin{tikzpicture}[->,>=stealth,shorten >=1pt,auto,node distance=3.5cm,scale=0.7,transform shape]

    \tikzstyle{state}=[circle,draw,minimum size=01cm]
    \def\weightpos{0.8};
    \def\bend{5};
    \def\bendsingle{0};
    \def\zerolabel{\,};
    
    \node[state] (t1) at (0,0) {$t_1$};
    \node[state] (t2) at (2.5,0) {$t_2$};

    \node (dots1) at (4.5,0) {$\ldots$};
    
    \node[state] (t3) at (6.5,0) {$t_{i-1}$};
    \node[state] (t4) at (9,0) {$t_i$};

    \node[state] (t5) at (11.5,0) {$t_{i+1}$};
    \node[state] (t6) at (14,0) {$t_{i+2}$};
    \node[state] (t7) at (16.5,0) {$t_{i+3}$};

    \node (dots2) at (18.5,0) {$\ldots$};
    
    \node[state] (t8) at (20.5,0) {$t_n$};
    
    \node[state] (b1) at (0,-4) {$b_1$};
    \node[state] (b2) at (2.5,-4) {$b_2$};
    
    \node (dots3) at (4.5,-4) {$\ldots$};
    
    \node[state] (b3) at (6.5,-4) {$b_{i-1}$};
    \node[state] (b4) at (9,-4) {$b_i$};
    \node[state] (b5) at (11.5,-4) {$b_{i+1}$};
    \node[state] (b6) at (14,-4) {$b_{i+2}$};
    \node[state] (b7) at (16.5,-4) {$b_{i+3}$};
    
    \node (dots4) at (18.5,-4) {$\ldots$};
    
    \node[state] (b8) at (20.5,-4) {$b_n$};

    \path[->] (t4) edge[loop above] node {} (t4);
    \path[->] (t5) edge[loop above] node {} (t5);
    
    \path[->] (t1) edge (b1);
    \path[->] (t2) edge (b2);
    \path[->] (t3) edge (b3);
    \path[->] (t6) edge (b6);
    \path[->] (t7) edge (b6);
    \path[->] (t8) edge (b6);
    
    \path[->] (b1) edge[bend right=0] (t4);
    
    \path[->] (b2) edge (b1);
    \path[->] (b4) edge (b3);
    \path[->] (b5) edge (b4);
    \path[->] (b6) edge (b5);
    \path[->] (b7) edge (b6);
    
    \draw[-] (b3) -- +(-1.2,0);
    \node at (4.5,-4) {$\ldots$};
    \draw[->] ([xshift=0.7cm]b2.east) -- (b2);
    
    \path[->] (b8) edge[bend left=20] (b6);
    
\end{tikzpicture}
    
        \caption{Policy $\finalpol_{i}$}
    \end{subfigure}
    \par\bigskip
    \begin{subfigure}{0.99\textwidth}
         \centering
        \begin{tikzpicture}[->,>=stealth,shorten >=1pt,auto,node distance=3.5cm,scale=0.7,transform shape]

    \tikzstyle{state}=[circle,draw,minimum size=01cm]
    \def\weightpos{0.8};
    \def\bend{5};
    \def\bendsingle{0};
    \def\zerolabel{\,};
    
    \node[state] (t1) at (0,0) {$t_1$};
    \node[state] (t2) at (2.5,0) {$t_2$};

    \node (dots1) at (4.5,0) {$\ldots$};
    
    \node[state] (t3) at (6.5,0) {$t_{i-1}$};
    \node[state] (t4) at (9,0) {$t_i$};

    \node[state] (t5) at (11.5,0) {$t_{i+1}$};
    \node[state] (t6) at (14,0) {$t_{i+2}$};
    \node[state] (t7) at (16.5,0) {$t_{i+3}$};

    \node (dots2) at (18.5,0) {$\ldots$};
    
    \node[state] (t8) at (20.5,0) {$t_n$};
    
    \node[state] (b1) at (0,-4) {$b_1$};
    \node[state] (b2) at (2.5,-4) {$b_2$};
    
    \node (dots3) at (4.5,-4) {$\ldots$};
    
    \node[state] (b3) at (6.5,-4) {$b_{i-1}$};
    \node[state] (b4) at (9,-4) {$b_i$};
    \node[state] (b5) at (11.5,-4) {$b_{i+1}$};
    \node[state] (b6) at (14,-4) {$b_{i+2}$};
    \node[state] (b7) at (16.5,-4) {$b_{i+3}$};
    
    \node (dots4) at (18.5,-4) {$\ldots$};
    
    \node[state] (b8) at (20.5,-4) {$b_n$};

    \path[->] (t4) edge[loop above] node {} (t4);
    \path[->] (t5) edge[loop above] node {} (t5);
    
    \path[->] (t1) edge (b1);
    \path[->] (t2) edge (b2);
    \path[->] (t3) edge (b3);
    
    \path[->] (b1) edge[bend right=0] (t5);
    \path[->] (b2) edge[bend right=0] (t5);
    \path[->] (b3) edge[bend right=0] (t5);
    \path[->] (b4) edge[bend right=0] (t5);
    \path[->] (b5) edge[bend right=0] (t5);
    \path[->] (b6) edge[bend right=0] (t5);
    \path[->] (b7) edge[bend right=0] (t5);
    \path[->] (b8) edge[bend right=0] (t5);

    \path[->] (t6) edge[bend right=0] (t5);
    \path[->] (t7) edge[bend right=25] (t5);
    \path[->] (t8) edge[bend right=25] (t5);

\end{tikzpicture}
    
        \caption{Policy $\initpol_{i+1}$}
    \end{subfigure}
    \caption{Part III: policies appearing in Howard's policy iteration on $P_n$. Only edges in the policy are shown; edge weights are omitted.}
\end{figure*}

\end{document}